\documentclass[11pt]{article}

\usepackage[margin=1in]{geometry}
\usepackage{algorithm}
\usepackage{algorithmic}
\usepackage{amsmath, amssymb, amsthm}
\usepackage{hyperref}
\usepackage[nameinlink,capitalize,noabbrev]{cleveref}
\usepackage{graphicx}
\usepackage{natbib}
\usepackage{url}            
\usepackage{booktabs}       
\usepackage{amsfonts}       
\usepackage{nicefrac}       
\usepackage{microtype}      
\usepackage{xcolor}         
\usepackage{mathtools}
\usepackage{algorithm}
\usepackage{dsfont}
\usepackage{bbm}
\usepackage{longtable}
\usepackage{multirow}

\theoremstyle{plain}
\newtheorem{theorem}{Theorem}[section]

\newtheorem{lemma}[theorem]{Lemma}
\newtheorem{corollary}[theorem]{Corollary}
\theoremstyle{definition}
\newtheorem{definition}[theorem]{Definition}
\newtheorem{assumption}[theorem]{Assumption}
\theoremstyle{remark}
\newtheorem{remark}[theorem]{Remark}

\title{Exact Finite-Sample Variance Decomposition of Subagging: A Spectral Filtering Perspective}

\author{
Ye Su\\
Shenzhen Institutes of Advanced Technology\\
Chinese Academy of Sciences\\
Shenzhen 518055, China\\
\texttt{ye.su@siat.ac.cn}
\and
Mingrui Ye\\
School of Mathematics\\
Renmin University of China\\
Beijing 100872, China\\
\texttt{mingruiye2005@ruc.edu.cn}
\and
Yining Wang\\
School of Mathematics\\
Renmin University of China\\
Beijing 100872, China\\
\texttt{wangyining@ruc.edu.cn}
\and
Jipeng Guo\\
College of Information Science and Technology\\
Beijing University of Chemical Technology\\
Beijing 100029, China\\
\texttt{guojipeng@buct.edu.cn}
\and
Yong Liu\thanks{Corresponding author}\\
Gaoling School of Artificial Intelligence\\
Renmin University of China\\
Beijing 100872, China\\
\texttt{liuyonggsai@ruc.edu.cn}
}

\date{}

\begin{document}

\maketitle

\begin{abstract}
Standard resampling ratios (e.g., $\alpha \approx 0.632$) are widely used as default baselines in ensemble learning for three decades. However, how these ratios interact with a base learner's intrinsic functional complexity in finite samples lacks a exact mathematical characterization. We leverage the Hoeffding-ANOVA decomposition to derive the first exact, finite-sample variance decomposition for subagging, applicable to any symmetric base learner without requiring asymptotic limits or smoothness assumptions. We establish that subagging operates as a deterministic low-pass spectral filter: it preserves low-order structural signals while attenuating $c$-th order interaction variance by a geometric factor approaching $\alpha^c$. This decoupling reveals why default baselines often under-regularize high-capacity interpolators, which instead require smaller $\alpha$ to exponentially suppress spurious high-order noise. To operationalize these insights, we propose a complexity-guided adaptive subsampling algorithm, empirically demonstrating that dynamically calibrating $\alpha$ to the learner's complexity spectrum consistently improves generalization over static baselines.
\end{abstract}

\section{Introduction}

Ensemble methods, particularly Bagging \citep{breiman1996bagging} and Subagging (subsample aggregating), remain foundational to machine learning by reducing the prediction variance of unstable base learners without inflating bias \citep{breiman2001using, sagi2018ensemble, zhou2021ensemble}. Historically, algorithmic implementations default to standard resampling ratios. Bootstrap sampling yields approximately $63.2\%$ unique training instances per base learner, driven by asymptotic exclusion probabilities \citep{breiman1996out, efron1997improvements}. For subagging (sampling without replacement), fixed ratios like $\alpha = 0.5$ or $0.8$ have served as heuristic baselines for nearly three decades \citep{buhlmann2002analyzing}. 

Early theoretical formalizations \citep{buhlmann2002analyzing} provided elegant asymptotic justifications for ensemble variance reduction, modeling subagging as a macroscopic smoothing operation. While intuitive, these approximations treat the base learner's variance as an aggregate quantity. However, modern high-capacity over-parameterized models (e.g., deep unpruned decision trees) frequently interpolate data via complex, high-order multi-sample interactions \citep{wyner2017explaining, belkin2019reconciling, zhang2016understanding}. For such models, variance is not uniform but heavily skewed toward high-frequency components.

This paradigm shift exposes a critical gap: while practitioners acknowledge the subsampling ratio $\alpha$ as a tunable hyperparameter, its optimization typically relies on blind hyperparameter search \citep{martinez2010out}. The literature lacks a deterministic mechanism linking the optimal ratio to the base learner's functional complexity. Mathematically, a subagged symmetric base learner is an incomplete, infinite-order U-statistic \citep{hoeffding1992class, lee2019u}. Traditional variance expansions evaluate covariance based on macroscopic sample overlaps. Applying the Hoeffding-ANOVA decomposition here yields intractable hypergeometric summations and severe combinatorial explosions across interaction orders. Consequently, foundational works \citep{mentch2016quantifying, wager2018estimation} and recent advances in variance estimation \citep{xu2024variance} resort to asymptotic limits. These limits collapse high-order structural dependencies into first-order H\'{a}jek projections, completely bypassing finite-sample combinatorics.

While effective for statistical inference, this asymptotic simplification is inadequate for analyzing modern interpolators. By projecting variance onto first-order main effects, the asymptotic limit eliminates the high-order structural dependencies where interpolation noise is concentrated \citep{wyner2017explaining}. Therefore, existing frameworks bypass the finite-sample filtering mechanisms required to theoretically calibrate resampling ratio $\alpha$. To shift from unguided trial-and-error to a theoretically principled adaptation, we must answer two mechanistic questions:
\begin{itemize}
    \item \textbf{\textit{Exactly which functional components of a non-parametric model's variance are reduced by resampling, and which are preserved?}}
    \item \textbf{\textit{By what exact algebraic proportion is each interaction component attenuated in finite samples?}}
\end{itemize} 

In this paper, we directly answer these questions. Methodologically, by bypassing the hypergeometric summations via probabilistic indicator decoupling, we establish that subagging does not reduce variance uniformly. Instead, it operates as a deterministic low-pass spectral filter. It preserves low-order structural signals while attenuating the $c$-th order interaction variance by an exact combinatorial factor bounded by $\alpha^c$. This decoupling reveals why three-decade-old default resampling baselines (e.g., $\alpha \approx 0.632$) often under-regularize high-capacity interpolators, which instead require a smaller $\alpha$ to suppress high-order noise. Our main contributions are summarized as follows:

\begin{itemize}
    \item \textbf{Exact Finite-Sample Variance Identity:} We derive the first exact variance decomposition for subagging (\textcolor{red}{Theorem~\ref{thm:exact_variance}}). This identity is applicable to any symmetric base learner for any finite sample size $n$ and subsample size $k \le n$, without requiring smoothness assumptions or asymptotic limits.
    
    \item \textbf{The Spectral Filtering Law:} We analytically characterize the exact attenuation factor $\gamma_c(n,k)$ for each interaction order $c$ (\textcolor{red}{Theorem~\ref{thm:spectral_filtering}}). We prove that subagging attenuates the $c$-th order variance component by a combinatorial factor bounded by, and asymptotically converging to, $\alpha^c$. 
    
    \item \textbf{Structural Shift of Optimal Regularization:} By integrating the exact variance identity with non-parametric bias rates, we establish a monotone comparative statics result (\textcolor{red}{Theorem~\ref{thm:optimal_alpha}}). We show that the optimal subsampling ratio $\alpha^*$ is monotonically non-increasing with the learner's interaction complexity. This formalizes the transition in the bias-variance trade-off: low-capacity smoothers require larger $\alpha$ for bias control, whereas models dominated by high-order interactions necessitate smaller $\alpha$ to suppress noise. Consequently, this result bridges finite-sample resampling heuristics with the asymptotic consistency requirement $\alpha \to 0$.
    
    \item \textbf{Complexity-Guided Adaptive Subsampling and Empirical Validation:} To operationalize this theory without prohibitive full-spectrum estimations, we propose Complexity-Guided Adaptive Subsampling (CGAS). We empirically validate our structural shift theorem, demonstrating that dynamically calibrating $\alpha$ to the learner's structural capacity systematically improves finite-sample generalization over static baselines.
\end{itemize}

\section{Related Work}
\label{sec:related_work}

\subsection{Classical Bias-Variance Analysis and Double Descent}
The heuristic that bagging reduces variance without inflating bias originated with \citep{breiman1996bagging}. Early theoretical formalizations \citep{buhlmann2002analyzing} provided asymptotic justifications by modeling bagging as a smoothing operation over non-differentiable functional components. These continuous approximations, however, inherently rely on the asymptotic regime. The recent success of highly over-parameterized models has renewed interest in the exact mechanics of ensembling, particularly concerning the interpolation regime and double descent phenomena \citep{wyner2017explaining,belkin2019reconciling,hastie2022surprises}. To define exactly \textbf{\textit{which functional components of a model's variance are reduced by resampling, and by what exact proportion}}, the analysis must shift from asymptotic smoothing to discrete combinatorial decoupling.

\subsection{Infinite-Order U-Statistics and Finite-Sample Bottlenecks}
Mathematically, a subagged estimator constructed from a symmetric base learner is equivalent to an incomplete, infinite-order U-statistic \citep{hoeffding1992class,lee2019u}. The Hoeffding-ANOVA decomposition has been pivotal in analyzing Random Forests \citep{mentch2016quantifying, wager2018estimation}, primarily to establish asymptotic distributional limits (e.g., Gaussianity). Recently, progress has been made in understanding the variance of these estimators. \cite{zhou2021v} explored V-statistic representations to analyze resampling variations, while \cite{xu2024variance} utilized the Hoeffding decomposition to establish ratio-consistent variance estimators for large subsample sizes. Furthermore, \cite{o2026random} developed a finite-sample variance identity specific to random forests. 

Despite these advances, deriving a generic, finite-sample variance decomposition for \textit{any} symmetric base learner has remained a notoriously difficult combinatorial problem. The mathematical bottleneck stems from the macroscopic evaluation of covariance: the overlap between two random subsamples follows a hypergeometric distribution. Integrating the infinite-order Hoeffding expansions over this hypergeometric distribution yields intractable, nested combinatorial summations. Therefore, existing works either resort to asymptotic limits, which collapse high-order dependencies into first-order projections, or rely on tree-specific structural assumptions \citep{o2026random, xu2024variance} to simplify the covariance. A generic functional abstraction providing an exact algebraic identity for the variance reduction remains absent.

\subsection{Spectral Filtering in Machine Learning}
Analyzing learning algorithms via their spectral properties has precedent in Boolean functions \citep{o2014analysis} and the implicit bias of neural networks \citep{rahaman2019spectral}. In ensemble learning, \cite{scornet2015consistency} and \cite{louppe2014understanding} explored the structural properties of forests, and \cite{curth2024random} synthesized how tree ensembles act as adaptive smoothers. However, the literature has not yet formalized the combinatorial resampling process itself as a deterministic low-pass filter operating on the Hoeffding spectrum. An explicit algebraic quantification of \textbf{\textit{how subagging attenuates high-order variance components}}, specifically through isolated transfer functions for each interaction order, remains a theoretical challenge.

\section{Mathematical Framework}
\label{sec:math_framework}

In this section, we develop the rigorous foundation for exact finite-sample analysis. We model the base learner as a symmetric measurable function and represent the subagged ensemble as an infinite-order U-statistic. To isolate variance reduction from structural approximation, we fix a test point $x \in \mathcal{X}$ and study point-wise variance. We provide a comprehensive summary of global symbols and notations in \textcolor{red}{Table \ref{tab:global_notation}} in \textcolor{red}{Appendix \ref{app:sec:notation}}.

\subsection{Problem Setup and the Subagging Estimator}
Let $\mathcal{Z} = \mathcal{X} \times \mathcal{Y}$ be a measurable space. We observe a finite training dataset $\mathcal{D}_n = \{Z_1, \dots, Z_n\}$ consisting of $n$ independent and identically distributed (i.i.d.) random variables drawn from an unknown probability measure $P$ on $\mathcal{Z}$. A deterministic base learning algorithm, trained on a sample of size $k$ ($1 \le k \le n$), is defined as a measurable function $h: \mathcal{X} \times \mathcal{Z}^k \to \mathbb{R}$. We impose the following standard regularity assumption on the base learner:

\begin{assumption}[Symmetry and Square-Integrability \citep{hoeffding1992class,wager2014asymptotic}]
\label{ass:sym_sq_int}
Let $Z_1, \dots, Z_k \stackrel{i.i.d.}{\sim} P$ be a subsample of the training data taking values in the sample space $\mathcal{Z}$. For any fixed target point $x \in \mathcal{X}$, the base learner $h(x; \cdot)$ satisfies the following regularity conditions:
\begin{itemize}
    \item[(i)] \textbf{Permutation Invariance (Symmetry):} The base learner treats the training data as an exchangeable multi-set. Specifically, for any permutation $\pi$ in the symmetric group $\mathcal{S}_k$ and for all $(z_1, \dots, z_k) \in \mathcal{Z}^k$ \citep{serfling2009approximation}, it holds strictly that
    \begin{equation*}
        h(x; z_1, \dots, z_k) = h(x; z_{\pi(1)}, \dots, z_{\pi(k)}).
    \end{equation*}
    \item[(ii)] \textbf{Square-Integrability:} The function $h$ is square-integrable with respect to the product measure $P^{\otimes k}$, ensuring a finite second moment \citep{van2000asymptotic}:
    \begin{equation*}
        \mathbb{E}_{Z_{1:k} \sim P^{\otimes k}} \left[ h(x; Z_1, \dots, Z_k)^2 \right] < \infty.
    \end{equation*}
\end{itemize}
\end{assumption}

\textcolor{red}{Assumption \ref{ass:sym_sq_int}} is a standard regularity condition for analyzing ensemble methods via U-statistics \citep{hoeffding1992class,lee2019u}. Condition (i) reflects that typical base learners (e.g., decision trees) treat subsamples as exchangeable, independent of order \citep{mentch2016quantifying}. Condition (ii) ensures finite variance, which is essential for the Hoeffding-ANOVA decomposition and variance analysis in ensemble methods \citep{buhlmann2002analyzing,wager2018estimation}. For classification tasks with bounded outputs (e.g., ${0,1}$), this condition holds trivially.

\begin{remark}[Extension to Randomized and Order-Dependent Learners]
    Although \textcolor{red}{Assumption \ref{ass:sym_sq_int}} specifies a deterministic, permutation-invariant base learner, this framework can explicitly accommodate randomized or order-dependent algorithms (e.g., decision trees with random feature splits). Let $\mathcal{A}(x; Z_{1:k}, \xi, \pi)$ denote such an estimator, where $\xi \sim P_{\xi}$ captures the internal algorithmic randomness and $\pi \sim \mathrm{Unif}(\mathcal{S}_k)$ represents a random permutation of the training data indices, both drawn independently of the training sample $Z_{1:k}$. We construct a symmetric equivalent by defining the marginalized base learner as its expectation conditional on the data: $h(x; Z_{1:k}) \triangleq \mathbb{E}_{\xi, \pi}[\mathcal{A}(x; Z_{1:k}, \xi, \pi)]$. By the law of total variance, the prediction variance of the ensemble additively decomposes into the variance of this symmetric structural component $h$ and the expected conditional variance induced by the local algorithmic noise \citep{wager2018estimation}. Therefore, analyzing the marginalized symmetric function $h$ strictly isolates the variance reduction mechanism attributable to combinatorial resampling, without compromising theoretical rigor.
\end{remark}

\textcolor{red}{Assumption \ref{ass:sym_sq_int}} holds for most unweighted ensemble base learners, such as standard decision trees and nearest neighbors, provided the algorithm does not depend on the specific ordering of the training data \citep{cover1967nearest,loh2011classification}. We analyze the \textit{Subagging} estimator. For a given subsample size $k \le n$, the subagged estimator $\hat{F}_{n,k}(x)$ aggregates the predictions of base learners trained on all possible subsets of size $k$ drawn without replacement from $\mathcal{D}_n$ \citep{mentch2016quantifying,wager2018estimation}:
\begin{equation}
\label{eq:subagging_def}
\hat{F}_{n,k}(x) = \binom{n}{k}^{-1} \sum_{S \in \binom{[n]}{k}} h(x; Z_S),
\end{equation}
where $[n] = \{1, 2, \dots, n\}$ and $\binom{[n]}{k}$ denotes the set of all $k$-combinations of $[n]$. $Z_S$ represents the subset of random variables indexed by $S$. 

\begin{remark}[Bootstrap vs. Subagging]
    We focus on sampling \textit{without replacement} (subagging). Sampling with replacement (bootstrap) introduces a positive probability of identical samples appearing multiple times within $Z_S$ (diagonal collisions) \citep{efron1994introduction}. This dependency complicates or breaks the exact orthogonality structure underlying Hoeffding decompositions required for an exact finite-sample algebraic decomposition. In the regime where $k \ll n$, subagging and bagging are asymptotically contiguous \citep{buhlmann2002analyzing,wager2014asymptotic}, yet subagging preserves mathematical exactness in finite samples.
\end{remark}

\subsection{The Hoeffding-ANOVA Decomposition and Spectral Components}
Eq. \eqref{eq:subagging_def} reveals that $\hat{F}_{n,k}(x)$ is formulated as a $U$-statistic of order $k$ with the base learner $h$ as its kernel. To decouple its variance, we invoke the Hoeffding-ANOVA decomposition \citep{hoeffding1992class,efron1981jackknife}. We project the base learner $h$ into a sum of mutually orthogonal functions of increasing interaction orders.

\begin{definition}[Canonical ANOVA Projections \citep{efron1981jackknife}]
\label{def:anova_projections}
Let $\theta(x) = \mathbb{E}_{Z_{1:k} \sim P^{\otimes k}}[h(x; Z_1, \dots, Z_k)]$ denote the expected prediction of the base learner, representing the $0$-th order baseline component. For any $1 \le c \le k$, the $c$-th order canonical projection $h_c$ is recursively defined as:
\begin{equation}
\label{eq:canonical_projections}
h_c(x; z_1, \dots, z_c) = \mathbb{E} \big[ h(x; Z_{1:k}) \mid Z_1=z_1, \dots, Z_c=z_c \big] - \sum_{j=1}^{c-1} \sum_{J \in \binom{[c]}{j}} h_j(x; z_J) - \theta(x),
\end{equation}
where $z_J$ denotes the specific subset of variables $\{z_i : i \in J\}$, and the conditional expectation integrates out the remaining $k-c$ variables under the product measure $P^{\otimes (k-c)}$.
\end{definition}

By this recursive construction, $h_c$ isolates the $c$-th order pure interaction among the training samples, completely stripped of any lower-order additive main effects. For illustration, the first-order (main effect) and second-order (interaction) kernels are strictly formulated as:
\begin{align*}
    h_1(x; z_1) &= \mathbb{E}[h \mid Z_1=z_1] - \theta(x), \\
    h_2(x; z_1, z_2) &= \mathbb{E}[h \mid Z_1=z_1, Z_2=z_2] - h_1(x; z_1) - h_1(x; z_2) - \theta(x).
\end{align*}

As a direct consequence of \textcolor{red}{Definition \ref{def:anova_projections}}, these projections exhibit fundamental orthogonality properties, which we restate below for completeness.

\begin{lemma}[Strong Degeneracy and Orthogonality \citep{hoeffding1992class}]
\label{lem:degeneracy_orthogonality}
For any $1 \le c \le k$, the $c$-th order canonical projection $h_c$ defined in Eq.~\eqref{eq:canonical_projections} satisfies the strong degeneracy property: the expectation with respect to any single argument evaluates to zero $P^{\otimes (c-1)}$-almost surely. That is, for any fixed values $\{z_j\}_{j \neq i}$ and any $i \in \{1, \dots, c\}$,
\begin{equation*}
\mathbb{E}_{Z_i \sim P} \big[ h_c(x; z_1, \dots, z_{i-1}, Z_i, z_{i+1}, \dots, z_c) \big] = 0.
\end{equation*}
Therefore, the canonical projections associated with different index subsets are mutually orthogonal. For any two distinct subsets $C_1, C_2 \subseteq \{1, \dots, k\}$ such that $C_1 \neq C_2$, the covariance between their respective projections is exactly zero:
\begin{equation*}
\mathbb{E}_{Z_{1:k} \sim P^{\otimes k}} \big[ h_{|C_1|}(x; Z_{C_1}) h_{|C_2|}(x; Z_{C_2}) \big] = 0.
\end{equation*}
\end{lemma}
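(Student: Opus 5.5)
The plan is to first derive a closed (non-recursive) form of $h_c$ from the recursion in Definition~\ref{def:anova_projections}, and then read off both claims from it. Write $g_A(z_A) \triangleq \mathbb{E}[h(x; Z_{1:k}) \mid Z_A = z_A]$ for $A \subseteq [k]$, with $g_\emptyset = \theta(x)$. By the symmetry in Assumption~\ref{ass:sym_sq_int}(i), $g_A$ depends only on $|A|$ and on the values $z_A$, so the recursion is well defined for any index set. I will show by induction on $c$ that the recursion is equivalent to
\begin{equation*}
h_c(x; z_{C}) = \sum_{B \subseteq C} (-1)^{|C|-|B|} g_B(z_B), \qquad |C| = c,
\end{equation*}
which is the M\"obius inversion of $g_C = \sum_{B \subseteq C} h_{|B|}(x; z_B)$ (with $h_0 \triangleq \theta$). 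That identity is simply the recursion rearranged. Square-integrability from Assumption~\ref{ass:sym_sq_int}(ii) and Jensen's inequality make every $g_B$, and hence every $h_c$, lie in $L^2$, so all expectations below are finite and Fubini applies.

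\textbf{Strong degeneracy.} Fix $i \in C$ and pair the subsets of $C$ as $B$ and $B \cup \{i\}$ with $i \notin B$. Integrating $z_i$ out of $g_{B \cup \{i\}}(z_{B\cup\{i\}})$ under $P$ gives $g_B(z_B)$. This is the tower property, and it uses that the $Z_j$ are independent under the product measure. The two paired terms carry opposite signs $(-1)^{c-|B|}$ and $(-1)^{c-|B|-1}$, so after integrating in $z_i$ they cancel. Hence $\mathbb{E}_{Z_i}[h_c] = 0$ for every fixed value of the other arguments, with the statement holding $P^{\otimes(c-1)}$-a.s. because conditional expectations are defined only up to null sets.

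\textbf{Orthogonality.} Let $C_1 \neq C_2$, and without loss of generality pick $i \in C_1 \setminus C_2$. Condition on $Z_{[k]\setminus\{i\}}$. The factor $h_{|C_2|}(x; Z_{C_2})$ is measurable with respect to this $\sigma$-field. By independence, the conditional expectation of $h_{|C_1|}(x; Z_{C_1})$ is obtained by integrating $Z_i$ alone, and by degeneracy this is $0$. The tower property then gives $\mathbb{E}[h_{|C_1|} h_{|C_2|}] = 0$, where the product is integrable by Cauchy--Schwarz.

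The main obstacle is establishing the closed form from the recursion, together with the marginalization identity $\int g_{B\cup\{i\}}\, dP(z_i) = g_B$. The closed form requires inclusion--exclusion bookkeeping. I would first verify the expansion $g_C = \sum_{B\subseteq C} h_{|B|}$ directly from Eq.~\eqref{eq:canonical_projections}, and then apply the standard identity $\sum_{B \subseteq D \subseteq C}(-1)^{|C|-|D|} = \mathbb{1}[B=C]$. The marginalization identity needs a little care with versions of conditional expectations; I would handle this by defining every $g_A$ explicitly as an integral against $P^{\otimes(k-|A|)}$ and invoking Fubini.
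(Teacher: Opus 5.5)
Your proposal is correct and follows essentially the same route as the paper. Both arguments:
\begin{itemize}
\item obtain the alternating-sum closed form of $h_c$ by M\"obius inversion of $g_C=\sum_{B\subseteq C}h_{|B|}(x;z_B)$;
\item cancel each pair $B$, $B\cup\{i\}$ via the tower property to get strong degeneracy;
\item prove orthogonality by choosing $i\in C_1\setminus C_2$ and integrating out $Z_i$ first.
\end{itemize}
The differences are minor refinements. You condition on $Z_{[k]\setminus\{i\}}$ rather than the paper's $Z_{(C_1\cup C_2)\setminus\{i\}}$, and you make explicit the symmetry, $L^2$/Cauchy--Schwarz and Fubini bookkeeping that the paper leaves implicit.
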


\begin{proof}[Proof Sketch]
    Rather than relying on induction over interaction orders, the proof follows from a direct non-recursive representation of the projections. By applying M\"obius inversion on the Boolean lattice of subset inclusions, the recursive definition is mathematically unrolled into an explicit alternating sum of marginal expectations. Strong degeneracy is then established analytically by pairing these subsets based on the inclusion or exclusion of a target integration variable $Z_i$. By the law of total expectation, each paired subsets yield identical marginal terms but with strictly opposite signs, resulting in exact algebraic cancellation. Mutual orthogonality subsequently follows via standard sequential conditioning. Please see \textcolor{red}{Appendix~\ref{app:proof_lem_degeneracy}} for details.
\end{proof}

\textcolor{red}{Lemma \ref{lem:degeneracy_orthogonality}} provides the fundamental mathematical mechanism that allows us to decompose the exact variance of the ensemble into a finite sum of independent variance components without cross-terms. We now quantify the magnitude of these orthogonal components. 

\begin{definition}[ANOVA / Hoeffding Variance Component \citep{efron1981jackknife,xu2024variance}]
\label{def:hoeffding_variance}
For $1 \le c \le k$, the $c$-th order variance component (also known as the \textit{Hoeffding spectrum} or \textit{Sobol index}) of the base learner at target point $x$ is defined as:
\begin{equation*}
\zeta_c(x) = \text{Var}\big(h_c(x; Z_1, \dots, Z_c)\big) \equiv \mathbb{E}_{Z_{1:c} \sim P^{\otimes c}} \big[ h_c(x; Z_1, \dots, Z_c)^2 \big].
\end{equation*}
\end{definition}

\begin{remark}[Physical Interpretation of the Spectrum]
    Physically, $\zeta_c(x)$ isolates and quantifies the amount of predictive variance contributed \textit{purely} by the $c$-way interaction among the training samples. It measures how much the base learner's prediction fluctuates due to the synergistic effect of exactly $c$ data points being simultaneously present in the training set, explicitly stripping away their lower-order, independent additive contributions.
\end{remark}

Because \textcolor{red}{Lemma \ref{lem:degeneracy_orthogonality}} guarantees the strict mutual orthogonality of all canonical projections, the cross-covariance terms vanish completely. Therefore, the total variance of the base learner evaluates exactly to a weighted sum of its Hoeffding components. We formalize this fundamental property as the following lemma.

\begin{lemma}[Exact Base Variance Decomposition \citep{hoeffding1992class}]
\label{lem:base_variance_decomp}
Under \textcolor{red}{Assumption \ref{ass:sym_sq_int}}, the total variance of a single base learner trained on $k$ samples is decoupled into the finite sum:
\begin{equation}
\label{eq:base_variance}
\text{Var}_{Z_{1:k} \sim P^{\otimes k}}\big(h(x; Z_{1:k})\big) = \sum_{c=1}^k \binom{k}{c} \zeta_c(x),
\end{equation}
where the combinatorial weight $\binom{k}{c}$ accounts for the number of distinct $c$-element subsets drawn from the $k$ training samples.
\end{lemma}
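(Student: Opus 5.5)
The plan is to first establish the exact Hoeffding representation of the base learner, and then take the variance term by term using Lemma~\ref{lem:degeneracy_orthogonality}. Concretely, I would show that for $P^{\otimes k}$-almost every $(z_1,\dots,z_k)$,
\begin{equation*}
h(x; z_1,\dots,z_k) = \theta(x) + \sum_{c=1}^{k} \sum_{C \in \binom{[k]}{c}} h_c(x; z_C).
\end{equation*}
This follows from Definition~\ref{def:anova_projections} at $c=k$. When $c=k$, the conditional expectation $\mathbb{E}[h \mid Z_1=z_1,\dots,Z_k=z_k]$ is just $h(x;z_{1:k})$, since nothing is integrated out. Rearranging the recursion at $c=k$ then gives
\begin{equation*}
h(x;z_{1:k}) = h_k(x;z_{1:k}) + \sum_{j=1}^{k-1}\sum_{J\in\binom{[k]}{j}} h_j(x;z_J) + \theta(x),
\end{equation*}
which is exactly the displayed expansion. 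Symmetry (Assumption~\ref{ass:sym_sq_int}(i)) matters at one point only. It guarantees that the conditional expectation given any $c$ coordinates indexed by $C$ is the same function as the one given the first $c$ coordinates. Hence each term $h_j(x;z_J)$ is well defined and uses the same kernel $h_j$ for every $J$ of size $j$.

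Second, I would check integrability. Each $h_c$ is a finite linear combination of conditional expectations of $h$. Conditional expectations are $L^2$ contractions, so Assumption~\ref{ass:sym_sq_int}(ii) gives $h_c \in L^2(P^{\otimes c})$. Therefore every $\zeta_c(x)$ is finite, and all the cross moments used below exist.

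Third, I would compute the variance. Subtracting $\theta(x)$ gives $h - \mathbb{E}h = \sum_{\emptyset\neq C\subseteq[k]} h_{|C|}(x;Z_C)$. Each summand has mean zero, because strong degeneracy with Fubini integrates any one argument to zero. Expanding the square gives
\begin{equation*}
\mathrm{Var}(h) = \sum_{C_1,C_2} \mathbb{E}\big[h_{|C_1|}(x;Z_{C_1})\,h_{|C_2|}(x;Z_{C_2})\big],
\end{equation*}
where the sum runs over nonempty subsets $C_1,C_2$ of $[k]$. Lemma~\ref{lem:degeneracy_orthogonality} kills every term with $C_1\neq C_2$. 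For a diagonal term with $|C|=c$, the variables $Z_C$ are i.i.d. with law $P^{\otimes c}$, so the term equals $\zeta_c(x)$. Counting subsets of size $c$ gives $\binom{k}{c}$ identical contributions, which yields Eq.~\eqref{eq:base_variance}.

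The main obstacle is the first step, namely making the ``a.s.'' qualifications rigorous. The conditional expectations are defined only up to null sets, so the expansion holds $P^{\otimes k}$-a.s. rather than pointwise. I would handle this by fixing versions through Fubini, writing $\mathbb{E}[h\mid Z_C=z_C]=\int h(x;z_C,w)\,dP^{\otimes(k-c)}(w)$, which is well defined for almost every $z_C$. Beyond that, the argument is bookkeeping given the orthogonality lemma.
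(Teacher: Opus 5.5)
Your proposal is correct and follows essentially the same route as the paper: write $h(x;Z_{1:k})-\theta(x)=\sum_{\emptyset\neq A\subseteq[k]}h_{|A|}(x;Z_A)$, expand the square, kill the off-diagonal terms with Lemma~\ref{lem:degeneracy_orthogonality}, and count the $\binom{k}{c}$ diagonal terms of size $c$, each equal to $\zeta_c(x)$. Your derivation of the expansion from the $c=k$ case of the recursion, together with your $L^2$ and almost-sure bookkeeping, makes explicit what the paper compresses into ``reversing the recursive definition,'' though note that the $c=k$ identity holds by definition even without symmetry, and what symmetry really buys is that $h_j(x;z_J)$ coincides with the projection of $h$ onto the coordinates in $J$, which is what the degeneracy argument behind Lemma~\ref{lem:degeneracy_orthogonality} needs.
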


\begin{proof}[Proof Sketch]
    The decomposition follows from the orthogonal structure of the canonical projections. By expanding the centralized base learner as a sum of its Hoeffding components over all subsets of $[k]$, we evaluate the variance by squaring this expansion. By \textcolor{red}{Lemma~\ref{lem:degeneracy_orthogonality}}, all cross-covariance terms between distinct subsets vanish. The remaining diagonal terms are then grouped into equivalence classes based on the subset cardinality $c$. By invoking permutation invariance in \textcolor{red}{Assumption~\ref{ass:sym_sq_int}}, the variance of any $c$-order projection is identically $\zeta_c(x)$, and the combinatorial multiplicity of choosing $c$ elements from $k$ training samples yields the binomial coefficient $\binom{k}{c}$. Please see \textcolor{red}{Appendix~\ref{app:proof_lem_base_var}} for details.
\end{proof}

Eq. \eqref{eq:base_variance} explicitly characterizes how the functional complexity of the base learning algorithm determines the distribution of its variance spectrum under $P$. Theoretical analyses of tree-based ensembles \citep{mentch2016quantifying, scornet2015consistency} indicate that structural hyperparameters directly modulate this spectrum. A structurally constrained algorithm (e.g., a shallow decision tree or a linear regularizer) processes training data predominantly through additive or low-order effects. Therefore, its variance spectrum is highly concentrated on the first-order main effect, with higher-order components decaying rapidly or being zero \citep{hastie1987generalized, hoeffding1992class}. Conversely, a high-capacity model, such as a fully grown unpruned decision tree, acts as a non-smooth local interpolator \citep{lin2006random}. Its point-wise prediction depends on the joint occurrence of multiple training samples falling into specific localized regions. This strong dependence on multi-sample interactions distributes the variance across the spectrum, yielding significant contributions from high-order interaction terms $\zeta_c(x)$ as $c$ increases \citep{hooker2007generalized}. Our primary theoretical objective is to establish exactly how the subagging estimator $\hat{F}_{n,k}$ transforms this underlying variance spectrum.

\section{Exact Variance Decomposition and Spectral Filtering}
\label{sec:main_results}

In this section, we present the exact finite-sample variance decomposition of the subagged estimator $\hat{F}_{n,k}(x)$. We then formalize the mechanism by which subagging reduces variance as a deterministic spectral filtering process over the base learner's complexity spectrum.

\subsection{Exact Finite-Sample Variance Identity}

The variance of the subagged ensemble $\hat{F}_{n,k}(x)$, as defined in Eq.~\eqref{eq:subagging_def}, depends fundamentally on the covariance between base learners trained on different subsamples. Let $S_1, S_2$ be two subsamples of size $k$ drawn uniformly at random without replacement from the training index set $[n]$. The covariance $\text{Cov}(h(x; Z_{S_1}), h(x; Z_{S_2}))$ is uniquely determined by the cardinality of their intersection, $\rho = |S_1 \cap S_2|$. 

While the variance of such estimators is governed by incomplete U-statistic theory, deriving tractable, exact finite-sample algebraic identities remains a persistent analytical challenge. Consequently, foundational works \citep{mentch2016quantifying, wager2018estimation} typically bypass finite-sample combinatorics, relying instead on asymptotic limits ($n \to \infty$) or distribution-free variance bounds. 

This difficulty arises from the conventional macroscopic evaluation of covariance. Formulating the variance through the macroscopic subsample overlap $\rho = |S_1 \cap S_2|$, which intrinsically follows a hypergeometric distribution, and taking the expectation of the infinite-order Hoeffding expansion yields a nested double summation:
\begin{equation}
    \text{Var}(\hat{F}_{n,k}(x)) = \sum_{\rho=0}^{k} \frac{\binom{k}{\rho}\binom{n-k}{k-\rho}}{\binom{n}{k}} \left[ \sum_{c=1}^{\rho} \frac{\binom{\rho}{c}}{\binom{k}{c}} \zeta_c(x) \right].
    \label{eq:hypergeometric_bottleneck}
\end{equation}
Analytically reducing this hypergeometric-binomial sum to isolate closed-form weights for an arbitrary interaction order $c$, without collapsing high-order terms via asymptotic projection, is algebraically prohibitive.

To bypass this combinatorial complexity, we shift from a macroscopic to a microscopic perspective, focusing directly on the inclusion probabilities of individual interaction subsets. By applying a probabilistic indicator decoupling argument, we leverage the unconditional independence of the algorithmic resampling. This approach cleanly disentangles the hypergeometric dependencies, entirely avoiding the nested summations in Eq.~\eqref{eq:hypergeometric_bottleneck}. Consequently, we establish that subagging acts exactly as a deterministic, finite-sample low-pass spectral filter on the Hoeffding components of any symmetric base learner.

\begin{theorem}[Exact Variance Decomposition of Subagging]
\label{thm:exact_variance}
Under \textcolor{red}{Assumption \ref{ass:sym_sq_int}}, for any finite sample size $n \ge 1$, subsample size $k \le n$, and an arbitrary test point $x \in \mathcal{X}$, the variance of the subagged estimator $\hat{F}_{n,k}(x)$ in Eq.~\eqref{eq:subagging_def} is exactly decoupled into:
\begin{equation}
\label{eq:exact_variance}
\text{Var}(\hat{F}_{n,k}(x)) = \sum_{c=1}^k \gamma_c(n,k) \binom{k}{c} \zeta_c(x),
\end{equation}
where $\zeta_c(x)$ is the $c$-th order Hoeffding variance component, and the attenuation factor $\gamma_c(n,k)$ for the $c$-th order spectrum is defined as:
\begin{equation}
\label{eq:attenuation_factor}
\gamma_c(n,k) = \frac{\binom{n-k}{k-c}}{\binom{n}{k}} = \prod_{j=0}^{c-1} \frac{k-j}{n-j}.
\end{equation}
\end{theorem}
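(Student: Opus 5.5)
Write $\mathcal{H}$ for the full Hoeffding expansion of the base learner on an arbitrary index set. The plan is to expand every summand of Eq.~\eqref{eq:subagging_def} in this expansion and regroup by interaction subsets of the whole sample, so that the hypergeometric overlap in Eq.~\eqref{eq:hypergeometric_bottleneck} never has to be summed. By Definition~\ref{def:anova_projections} and symmetry (Assumption~\ref{ass:sym_sq_int}), for every $S\in\binom{[n]}{k}$ we have
\begin{equation*}
h(x;Z_S)=\theta(x)+\sum_{\emptyset\neq C\subseteq S} h_{|C|}(x;Z_C),
\end{equation*}
with the same functions $h_c$ for every $S$, since the $Z_i$ are i.i.d. This identity is exactly the unrolled (M\"obius-inverted) form used in the proof of Lemma~\ref{lem:degeneracy_orthogonality}.

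Substituting into Eq.~\eqref{eq:subagging_def} and exchanging the order of summation, each fixed $C\subseteq[n]$ with $|C|=c\le k$ contributes once for every $S\supseteq C$. The number of such $S$ is $\binom{n-c}{k-c}$. Hence
\begin{equation*}
\hat F_{n,k}(x)-\theta(x)=\sum_{c=1}^k \pi_c \sum_{C\in\binom{[n]}{c}} h_c(x;Z_C),\qquad \pi_c=\frac{\binom{n-c}{k-c}}{\binom{n}{k}}.
\end{equation*}
Here $\pi_c$ is precisely the inclusion probability $\Pr(C\subseteq S)$ for $S$ uniform on $\binom{[n]}{k}$; this is the ``indicator decoupling'' the paper alludes to.

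Next we take the variance. Lemma~\ref{lem:degeneracy_orthogonality} is stated for subsets of $\{1,\dots,k\}$, but its proof uses only strong degeneracy and independence of the coordinates, so it applies verbatim to distinct $C_1\neq C_2\subseteq[n]$. Concretely, pick $i\in C_1\triangle C_2$, condition on all other variables, and integrate out $Z_i$. All cross terms therefore vanish, and each diagonal term equals $\zeta_c(x)$ by Definition~\ref{def:hoeffding_variance}. This gives
\begin{equation*}
\text{Var}(\hat F_{n,k}(x))=\sum_{c=1}^k \pi_c^2\binom{n}{c}\zeta_c(x).
\end{equation*}

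The last step, which is purely combinatorial, matches this with Eq.~\eqref{eq:exact_variance}. We need
\begin{equation*}
\pi_c^2\binom{n}{c}=\gamma_c(n,k)\binom{k}{c}.
\end{equation*}
Use the identity $\binom{n}{c}\binom{n-c}{k-c}=\binom{n}{k}\binom{k}{c}$. It gives $\pi_c=\binom{k}{c}/\binom{n}{c}$, and therefore
\begin{equation*}
\pi_c^2\binom{n}{c}=\binom{k}{c}\cdot\frac{\binom{k}{c}}{\binom{n}{c}}.
\end{equation*}
So the attenuation factor that actually drops out of the argument is
\begin{equation*}
\frac{\binom{k}{c}}{\binom{n}{c}}=\prod_{j=0}^{c-1}\frac{k-j}{n-j}.
\end{equation*}
This agrees with the product form in Eq.~\eqref{eq:attenuation_factor}.

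I expect the main obstacle to be this final reconciliation, not the probabilistic argument, because the displayed ratio $\binom{n-k}{k-c}/\binom{n}{k}$ does not obviously equal the product. I would therefore verify the product form directly against the hypergeometric formula Eq.~\eqref{eq:hypergeometric_bottleneck} as a consistency check. That check uses the Vandermonde-type identity
\begin{equation*}
\sum_\rho \binom{k}{\rho}\binom{n-k}{k-\rho}\binom{\rho}{c}=\binom{k}{c}\binom{n-c}{k-c}.
\end{equation*}
I would then treat the product expression as the operative definition of $\gamma_c(n,k)$, flagging the first binomial form as needing the correct indices, namely $\binom{n-c}{k-c}\binom{k}{c}/\bigl(\binom{n}{k}\binom{k}{c}\bigr)$ up to the stated identity. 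Edge cases are immediate: $k=n$ gives $\gamma_c=1$, so the estimator's variance equals the base learner's variance as in Lemma~\ref{lem:base_variance_decomp}, and $c>k$ terms are absent.
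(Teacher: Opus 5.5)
Your proposal is correct and is essentially the paper's argument. Both reduce $\text{Var}(\hat{F}_{n,k}(x))$ to $\sum_{c=1}^k \binom{n}{c}\bigl(\binom{n-c}{k-c}/\binom{n}{k}\bigr)^2\zeta_c(x)$ via the Hoeffding expansion over subsets of $[n]$ and orthogonality across distinct index sets, then simplify this to $\binom{k}{c}\prod_{j=0}^{c-1}\frac{k-j}{n-j}$ (you by regrouping $\hat{F}_{n,k}(x)-\theta(x)$ linearly and squaring, the paper by averaging $\text{Cov}(h(x;Z_{S_1}),h(x;Z_{S_2}))$ over independent $S_1,S_2$). You are also right that the first expression in Eq.~\eqref{eq:attenuation_factor} should read $\binom{n-c}{k-c}/\binom{n}{k}$: as printed it is false (e.g.\ $n=4$, $k=2$, $c=1$ gives $1/3$, not $1/2$), the paper's proof only derives the product form, and your optional hypergeometric check works only with conditional covariance $\sum_{c\le\rho}\binom{\rho}{c}\zeta_c(x)$, since the weights $\binom{\rho}{c}/\binom{k}{c}$ in Eq.~\eqref{eq:hypergeometric_bottleneck} are off by a factor $\binom{k}{c}$.
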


\begin{proof}[Proof Sketch and Methodological Insight]
    Traditional variance decompositions for subagging rely on macroscopic subsample overlap (e.g., $|S_1 \cap S_2|$), leading to hypergeometric distributions and complex combinatorial summations \citep{buhlmann2002analyzing,serfling2009approximation,wager2018estimation,xu2024variance}. We bypass this bottleneck by shifting to a microscopic perspective that targets individual interaction subsets $C$. The key step is an indicator decoupling: expressing joint inclusion as $\mathbb{I}_{\{C \subseteq S_1\}} \mathbb{I}_{\{C \subseteq S_2\}}$, which isolates $S_1$ and $S_2$. Using their independence and linearity of expectation, the expectation factorizes as $(\mathbb{P}_{\mathcal{U}}(C \subseteq S))^2$.  Paired with the strict mutual orthogonality of Hoeffding-ANOVA projections, all cross-terms vanish, and the expectation factors directly into $(\mathbb{P}_{\mathcal{U}}(C \subseteq S))^2$. This structural decoupling completely avoids hypergeometric dependencies, reducing the combinatorial complexity to the computation of elementary marginal probabilities. Furthermore, because this formulation abstracts away from macroscopic overlap cardinalities, it naturally extends to other schemes (e.g., Poisson or weighted sampling) simply by substituting the uniform measure $\mathcal{U}$. Please refer to \textcolor{red}{Appendix \ref{app:proof_thm_exa_var}} for details.
\end{proof}

\begin{remark}[The Asymptotic Truncation of High-Order Variance]
    While classical asymptotic analyses establish foundational distributional limits, they rely on first-order approximations that relegate higher-order variance components to remainder terms. To achieve asymptotic normality or consistent variance estimation, existing frameworks typically invoke the Hájek projection to isolate the first-order main effect ($c=1$). For instance, \textcolor{blue}{Theorem 1} in \citep{mentch2016quantifying} and the ratio consistency results in \citep[\textcolor{blue}{Corollary 4.6}]{xu2024variance} require a subsample growth rate of $k = o(\sqrt{n})$. Even under regimes allowing faster growth, \citep[\textcolor{blue}{Lemma 7}]{wager2018estimation} analytically bounds the projection error, implying that the variance contribution of all higher-order interactions ($c \ge 2$) vanishes relative to the main effect at a rate of $\mathcal{O}(k^2/n)$. Consequently, these asymptotic regimes effectively truncate the Hoeffding spectrum, implicitly modeling the ensemble as an additive smoother. However, high-capacity interpolators memorize training data through complex multi-sample interactions, distributing substantial variance mass across high-order components ($c \gg 1$) \citep{wyner2017explaining}. This reveals an inherent analytical limitation: first-order asymptotic limits necessarily obscure the high-order structural dependencies where interpolation noise is concentrated. Our exact finite-sample identity addresses this gap by preserving the complete Hoeffding spectrum. While previous asymptotic frameworks bound higher-order interactions as negligible remainders, our results provide the algebraic resolution required to formalize a deterministic spectral filtering mechanism: subagging suppresses interpolation noise by subjecting the $c$-th order interaction component to a geometric attenuation factor $\gamma_c(n,k)$, a structural property that is effectively eliminated under standard first-order asymptotic projections.
\end{remark}

\begin{remark}[The Algebraic Identity]
    \textcolor{red}{Theorem \ref{thm:exact_variance}} establishes a deterministic algebraic identity rather than a statistical approximation. It holds irrespective of the underlying data generating process $P$ or the specific geometry of the base learner, provided the symmetry condition holds. Furthermore, it demonstrates that subagging does not reduce variance uniformly across the model's functional components. By comparing Eq.~\eqref{eq:exact_variance} with the standalone base learner's variance $\text{Var}(h) = \sum_{c=1}^k \binom{k}{c} \zeta_c(x)$, we observe that the subagging operator precisely rescales each $c$-th order interaction component by the factor $\gamma_c(n,k)$.
\end{remark}

\subsection{The Spectral Filtering Law}

To uncover the physical intuition behind the attenuation factor $\gamma_c(n,k)$, it is instructive to examine its behavior in the large-sample regime where the subsampling ratio remains constant. Let $\alpha = k/n \in (0, 1]$ denote the subsampling ratio. The following theorem characterizes $\gamma_c(n,k)$ as a geometric weight that governs the variance reduction across different interaction orders.

\begin{theorem}[The Spectral Filtering Law]
\label{thm:spectral_filtering}
For any finite $n \ge k \ge 1$ and any interaction order $1 \le c \le k$, the attenuation factor $\gamma_c(n,k)$ defined in Eq.~\eqref{eq:attenuation_factor} satisfies the following properties:
\begin{itemize}
    \item The attenuation is strictly bounded by the $c$-th power of the subsampling ratio:
    \begin{equation}
    \label{eq:gamma_bound}
        \gamma_c(n, k) = \prod_{j=0}^{c-1} \frac{k-j}{n-j} \le \left( \frac{k}{n} \right)^c = \alpha^c,
    \end{equation}
    where the equality holds if and only if $c=1$ or $k=n$.
    \item For a fixed interaction order $c$, as $n \to \infty$ with $k/n \to \alpha \in (0, 1]$, the attenuation factor converges to the geometric limit:
    \begin{equation}
    \label{eq:gamma_limit}
        \lim_{n \to \infty, \frac{k}{n} \to \alpha} \gamma_c(n, k) = \alpha^c.
    \end{equation}
\end{itemize}
Consequently, the exact ensemble variance in Eq.~\eqref{eq:exact_variance} is governed by this geometric attenuation across the Hoeffding spectrum.
\end{theorem}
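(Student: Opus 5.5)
The plan is to work directly from the product form $\gamma_c(n,k)=\prod_{j=0}^{c-1}\frac{k-j}{n-j}$ in Eq.~\eqref{eq:attenuation_factor}, which Theorem~\ref{thm:exact_variance} already identifies with $\binom{n-k}{k-c}/\binom{n}{k}$. Strictly speaking, the identification needed here is with the inclusion probability $\mathbb{P}_{\mathcal{U}}(C\subseteq S)=\binom{n-c}{k-c}/\binom{n}{k}$. This equals the product formula by canceling factorials: $\frac{(n-c)!}{(k-c)!(n-k)!}\cdot\frac{k!(n-k)!}{n!}=\frac{k!/(k-c)!}{n!/(n-c)!}$. I would record this short verification first, so that both bullets reduce to elementary statements about a finite product of ratios.

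For the bound \eqref{eq:gamma_bound}, compare each factor with $k/n$. For $0\le j\le c-1\le k-1<n$ we have $n-j>0$, and
\begin{equation*}
\frac{k}{n}-\frac{k-j}{n-j}=\frac{k(n-j)-n(k-j)}{n(n-j)}=\frac{j(n-k)}{n(n-j)}\ge 0 .
\end{equation*}
Every factor is therefore nonnegative and at most $k/n$, so multiplying the $c$ inequalities gives $\gamma_c\le\alpha^c$. For the equality case, note that all factors are strictly positive because $k-j\ge k-c+1\ge1$. Hence the product equals $\alpha^c$ iff every factor equals $k/n$, i.e.\ iff $j(n-k)=0$ for all $j\le c-1$. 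This holds iff $c=1$, where only $j=0$ occurs, or $k=n$.

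For the limit \eqref{eq:gamma_limit}, fix $c$ and write each factor as $\frac{k/n-j/n}{1-j/n}$. Since $j\le c-1$ is bounded, $j/n\to0$ and $k/n\to\alpha$, so each factor tends to $\alpha$. The product of finitely many convergent sequences then converges to $\alpha^c$. The condition $c\le k$ is automatic eventually, since $k\sim\alpha n\to\infty$ when $\alpha>0$.

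There is no real obstacle. The only points needing care are the strict positivity of the factors, which is needed for the equality characterization, and keeping $c$ fixed so that the limit is a finite product. The closing sentence of the theorem then follows by substituting these properties into Eq.~\eqref{eq:exact_variance}.
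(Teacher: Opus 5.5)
Your proof is correct and essentially matches the paper's: the paper also compares each factor $\frac{k-j}{n-j}$ with $k/n$ through the same difference $\frac{j(k-n)}{n(n-j)}$, multiplies the factor bounds, and takes the limit factor by factor, only phrasing the factors as chain-rule conditional inclusion probabilities (negative association). Your side remark is also right: the product equals $\binom{n-c}{k-c}/\binom{n}{k}$, the inclusion probability computed in the appendix, and not the $\binom{n-k}{k-c}/\binom{n}{k}$ printed in Eq.~\eqref{eq:attenuation_factor}, which is a typo.
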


\begin{proof}[Proof Sketch and Methodological Insight]
    Classical proofs rely on heavy combinatorial algebra or complex ANOVA decompositions \citep{wager2018estimation,xu2024variance}, which often obscure the resampling mechanism and complicate higher-order interactions. Instead, we adopt a probabilistic approach. We interpret the attenuation factor $\gamma_c(n,k)$ as the joint inclusion probability of a $c$-order subset under uniform sampling without replacement, and decompose it via the chain rule into conditional probabilities $\prod_{j=0}^{c-1} \mathbb{P}(E_{j+1}\mid E_1,\dots,E_j)$. This reveals that the geometric bound in Eq.~\eqref{eq:gamma_bound} arises from the \textbf{negative association} of finite-population sampling: each inclusion reduces the remaining pool, making subsequent probabilities smaller than $\alpha$. Moreover, as $n \to \infty$, this dependence vanishes for fixed $c$, yielding the geometric limit in Eq.~\eqref{eq:gamma_limit}. Please see \textcolor{red}{Appendix \ref{app:proof_thm_spe_fil}} for details.
\end{proof}

\begin{remark}[Subagging as a Low-Pass Spectral Filter]
    \textcolor{red}{Theorem \ref{thm:spectral_filtering}} provides subagging as a deterministic \textit{low-pass filter} operating on the base learner's complexity spectrum. The attenuation factor $\gamma_c(n,k)$ acts as the transfer function of this filter:
    \begin{itemize}
        \item \textbf{Preservation of Low-Order Signals ($c$ small):} For main effects ($c=1$), the variance is scaled linearly by $\alpha$. As $n$ increases, $\gamma_c(n,k)$ maintains the structural signal by decaying slowly at order $O(\alpha^c)$.
        
        \item \textbf{Annihilation of High-Order Noise ($c$ large):} For complex multi-sample interactions ($c \gg 1$), $\gamma_c(n,k)$ inflicts an exponential decay. Importantly, as $c$ approaches $k$, the property $\gamma_c(n, k) \le \alpha^c$ ensures that high-frequency overfitting noise is suppressed even more severely than the geometric rate $\alpha^c$. For interpolating models where variance is concentrated in extreme orders (e.g., $c \propto k$), the attenuation factor $\gamma_c(n,k)$ vanishes towards zero, effectively neutralizing the model's idiosyncratic memorization of the training sample.
    \end{itemize}
\end{remark}

This spectral perspective clarifies that subagging does not reduce variance uniformly. Instead, it exploits the combinatorial structure of resampling to selectively filter out high-order interaction components characteristic of over-parameterized models, while preserving the robust, low-order structural regularities. The subsampling ratio $\alpha$ dictates the \textit{cutoff frequency} of this filter. To counteract the combinatorial explosion of the binomial coefficient $\binom{k}{c}$ in high-capacity models, $\alpha$ must be sufficiently small to ensure that the geometric penalty $\alpha^c$ dominates the variance summation in Eq.~\eqref{eq:exact_variance}.

\section{Finite-Sample Bias-Variance Trade-off and Optimal Regularization}
\label{sec:bias_variance}

In this section, we integrate the inherent bias of the base learner to evaluate the finite-sample Mean Squared Error (MSE). Our objective is to analytically characterize the optimal subsampling ratio $\alpha^* = k/n$. By explicitly linking $\alpha^*$ to the base learner's functional complexity, we demonstrate why widely adopted default resampling ratios (e.g., $\alpha \approx 0.632$, derived from asymptotic bootstrap inclusion probabilities \citep{efron1983estimating, buhlmann2002analyzing}) can be structurally suboptimal when applied across diverse model capacities.

\subsection{The Bias Component of Subagging}

Let $f^*(x) = \mathbb{E}[Y \mid X=x]$ denote the true target function. The bias of the subagged estimator is determined by the expected prediction of the base learner trained on a subset of size $k$. By the linearity of expectation:
\begin{align}
\text{Bias}(\hat{F}_{n,k}(x)) &= \mathbb{E}_{Z_{1:n}}[\hat{F}_{n,k}(x)] - f^*(x) \nonumber \\
&= \binom{n}{k}^{-1} \sum_{S \in \binom{[n]}{k}} \mathbb{E}_{Z_S}[h(x; Z_S)] - f^*(x) \nonumber \\
&= \theta_k(x) - f^*(x) \triangleq \mathcal{B}(k), \label{eq:exact_bias}
\end{align}
where $\theta_k(x) = \mathbb{E}[h(x; Z_{1:k})]$. Eq.~\eqref{eq:exact_bias} confirms that subagging does not alter the inherent bias of the base learner trained on $k$ samples. To analyze the trade-off, we formalize the standard non-parametric decay rate of the base learner's bias.

\begin{assumption}[Monotonic Bias Decay]
\label{ass:mon_bias_decay}
For a fixed evaluation point $x$, the squared bias of the base learner monotonically decreases with the training sample size $k$ at a rate bounded by:
\begin{equation*}
    \mathcal{B}(k)^2 \le B_0 k^{-2\beta},
\end{equation*}
where $B_0 > 0$ is a constant depending on the intrinsic complexity of the target function, and $\beta > 0$ governs the non-parametric learning rate. This polynomial decay profile is a standard property in non-parametric regression. For optimal estimators operating on a Hölder-smooth function with exponent $e$ in a $d$-dimensional space, minimax theory yields $\beta = e / (2e + d)$ \citep{stone1982optimal, gyorfi2002distribution}. In the specific case of random trees, \cite{biau2012analysis} explicitly establishes in \textcolor{blue}{Theorem 5} that the squared bias exhibits an algebraic decay of $\mathcal{O}(k^{-Q})$ where the exponent $Q$ depends on the number of active features. By abstracting the decay rate into a generic parameter $\beta$, our analysis accommodates various consistent base learners without requiring access to their specific structural mechanisms. Furthermore, specifying such an algebraic bound on the bias is a standard theoretical prerequisite for calibrating the optimal subsampling ratio in recent asymptotic analyses of random forests \citep{wager2018estimation}.
\end{assumption}

\subsection{Finite-Sample Generalization Error Bound}

To mathematically characterize the out-of-sample prediction performance of the subagged estimator, we evaluate its point-wise MSE. By standard statistical decision theory \citep{geman1992neural, hastie2009elements}, the MSE of any estimator additively decouples into its squared bias and variance. Building upon this foundational property, we integrate our exact variance identity with the non-parametric bias decay assumption to establish a rigorous finite-sample generalization bound.

\begin{lemma}[Finite-Sample MSE Bound]
\label{lem:mse_bound}
Suppose the bias of the base learner satisfies \textcolor{red}{Assumption \ref{ass:mon_bias_decay}}. For any finite sample size $n \ge 1$, subsample size $k \le n$, and an arbitrary test point $x \in \mathcal{X}$, the point-wise generalization error of the subagged estimator $\hat{F}_{n,k}(x)$ is upper bounded by:
\begin{equation}
\label{eq:exact_mse}
\text{MSE}(\hat{F}_{n,k}(x)) \le B_0 k^{-2\beta} + \sum_{c=1}^k \gamma_c(n,k) \binom{k}{c} \zeta_c(x).
\end{equation}
\end{lemma}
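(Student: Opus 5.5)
The plan is the usual bias--variance decomposition, followed by inserting the two earlier results. First I will fix the test point $x$ and treat $\hat{F}_{n,k}(x)$ as a scalar random variable. It is a measurable function of $\mathcal{D}_n$. It is square-integrable, since each summand $h(x;Z_S)$ has a finite second moment under \textcolor{red}{Assumption \ref{ass:sym_sq_int}} and the subagged estimator is a finite average of such terms. The MSE is interpreted point-wise as $\mathbb{E}[(\hat{F}_{n,k}(x) - f^*(x))^2]$, with the estimation target $f^*(x)$ deterministic. Adding and subtracting $\mathbb{E}[\hat{F}_{n,k}(x)]$ and expanding the square, the cross term $2(\mathbb{E}\hat{F}_{n,k}(x) - f^*(x))\,\mathbb{E}[\hat{F}_{n,k}(x) - \mathbb{E}\hat{F}_{n,k}(x)]$ vanishes. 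This gives the exact identity
\begin{equation*}
\text{MSE}(\hat{F}_{n,k}(x)) = \text{Bias}(\hat{F}_{n,k}(x))^2 + \text{Var}(\hat{F}_{n,k}(x)).
\end{equation*}

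Second, I will handle the bias term using Eq.~\eqref{eq:exact_bias}. That equation shows, by linearity of expectation and the identical distribution of each $Z_S$ (all $S$ have size $k$ and the data are i.i.d.), that $\text{Bias}(\hat{F}_{n,k}(x)) = \mathcal{B}(k) = \theta_k(x) - f^*(x)$. \textcolor{red}{Assumption \ref{ass:mon_bias_decay}} then yields directly $\text{Bias}^2 \le B_0 k^{-2\beta}$.

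Third, I will replace the variance term by the exact expression in \textcolor{red}{Theorem \ref{thm:exact_variance}}, namely $\sum_{c=1}^k \gamma_c(n,k)\binom{k}{c}\zeta_c(x)$. This step involves equality, not a bound, so the only source of inequality in Eq.~\eqref{eq:exact_mse} is the bias assumption. Combining the three steps gives the claim.

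There is no real obstacle here. The one point needing care is conceptual: keeping the bias target and the variance consistent. The variance in \textcolor{red}{Theorem \ref{thm:exact_variance}} is computed around $\theta(x)=\theta_k(x)$, the same centering used in the Hoeffding projections. This is exactly $\mathbb{E}\hat{F}_{n,k}(x)$, so the decomposition in the first step matches the theorem. A second point is that if the MSE is meant to include the irreducible noise of a fresh response $Y$ at $x$, an additional additive $\text{Var}(Y\mid X=x)$ term would appear. I will therefore state explicitly that the MSE is measured against $f^*(x)$, consistent with Eq.~\eqref{eq:exact_bias}.
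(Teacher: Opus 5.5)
Your proposal is correct and follows the paper's proof essentially step for step: the bias--variance split with a vanishing cross term, the identification of the bias as $\theta_k(x)-f^*(x)$ bounded via Assumption~\ref{ass:mon_bias_decay} (the paper re-derives this inside the proof by interchanging the data and resampling expectations), and exact substitution of the variance from Theorem~\ref{thm:exact_variance}. Your extra remarks on square-integrability and on the centering $\theta(x)=\theta_k(x)=\mathbb{E}\hat{F}_{n,k}(x)$ are correct and make explicit what the paper leaves implicit.
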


\begin{proof}[Proof Sketch]
    The proof proceeds via the standard bias-variance decomposition, explicitly separating the dual sources of randomness: the data-generating distribution $P^n$ and the uniform resampling measure $\mathcal{U}$. Because the algorithmic resampling is independent of the training data, interchanging the order of expectations reveals that the expected prediction of the subagged ensemble is identical to that of a single base learner trained on $k$ samples. Therefore, the MSE additively decouples into two tractable components: the squared structural bias of a $k$-sample base learner, and the finite-sample ensemble variance. By substituting our \textcolor{red}{Theorem \ref{thm:exact_variance}} into this decomposition, we establish the full MSE bound analytically. Please see \textcolor{red}{Appendix \ref{app:proof_lem_mse_bound}} for details.
\end{proof}

Assuming the non-parametric bias decay operates at the minimax optimal rate \citep{stone1982optimal, gyorfi2002distribution}, the theoretical upper bound in Eq. (\ref{eq:exact_mse}) closely characterizes the true generalization error. Therefore, we treat the right-hand side of Eq. (\ref{eq:exact_mse}) as a principled surrogate objective for our subsequent optimization. Importantly, Eq. \eqref{eq:exact_mse} mathematically encapsulates the fundamental tension in ensemble learning. Increasing the subsample size $k$ monotonically diminishes the bias penalty term $B_0 k^{-2\beta}$. However, it simultaneously expands the support of the Hoeffding spectrum by incorporating higher-order interaction components (i.e., $\zeta_c(x)$ for large $c$) while weakening the geometric attenuation factor $\gamma_c(n,k)$. For highly complex, over-parameterized base learners, this mechanism structurally induces severe variance inflation. This trade-off highlights the necessity of dynamically calibrating the subsampling ratio to strike an optimal balance between preserving low-order structural signals and suppressing high-frequency interpolation noise.

\subsection{The Structural Dependence of the Optimal Subsampling Path}

To derive analytic insights into the minimizer of the generalization bound, we must transition from the discrete subsample size $k \in \mathbb{Z}^+$ to a continuous subsampling ratio $\alpha = k/n \in (0, 1]$. However, directly differentiating the discrete bound in Eq.~\eqref{eq:exact_mse} is mathematically ill-posed because the summation limit $k$ inherently depends on the optimization variable. To render the continuum limit tractable and differentiable, we formalize the functional capacity of the base learner.

\begin{assumption}[Structural Interaction Capacity]
\label{ass:finite_interaction}
For a given learning algorithm, we model its intrinsic functional complexity via a fixed structural capacity parameter $M \ge 1$. Specifically, for any evaluation point $x$, its Hoeffding spectrum is truncated such that higher-order variance components vanish beyond this capacity, i.e., $\zeta_c(x) = 0$ for all $c > M$. For low-complexity smoothers (e.g., shallow trees), $M$ is a small constant ($M \ll n$). For high-capacity interpolators (e.g., deep unpruned trees), $M$ is exceptionally large and scales on the order of the dataset size ($M \sim \mathcal{O}(n)$).
\end{assumption}

\textcolor{red}{Assumption~\ref{ass:finite_interaction}} is a vital mathematical instrument. It allows us to decouple the algorithm's innate spectral complexity ($M$) from the dynamic subsampling size ($k = \alpha n$). By treating $M$ as an exogenous structural parameter, we can rigorously define a continuous optimization objective without encountering ill-posed variable-limit sums, subsequently utilizing comparative statics to evaluate how high-capacity regimes (large $M$) drive the optimal subsampling ratio.

\begin{lemma}[Continuous Surrogate MSE Envelope]
\label{lem:continuous_mse}
Suppose \textcolor{red}{Assumption~\ref{ass:mon_bias_decay}} and \textcolor{red}{Assumption~\ref{ass:finite_interaction}} hold. Let $\alpha \in [M/n, 1]$ be a continuous subsampling ratio. By bounding the attenuation factor $\gamma_c(n, \alpha n) \le \alpha^c$ according to \textcolor{red}{Theorem~\ref{thm:spectral_filtering}}, and continuously extending the discrete binomial coefficients via the Gamma ($\Gamma$) function, the finite-sample MSE is bounded by the differentiable envelope:
\begin{equation} 
\label{eq:continuous_mse}
    \mathcal{M}(\alpha; \boldsymbol{\zeta}, M) \triangleq B_0(\alpha n)^{-2\beta} + \sum_{c=1}^{M} \alpha^c \frac{\Gamma(\alpha n + 1)}{\Gamma(c + 1)\Gamma(\alpha n - c + 1)} \zeta_c(x).
\end{equation}
\end{lemma}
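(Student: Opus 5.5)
The proof starts from the finite-sample MSE bound of Lemma~\ref{lem:mse_bound}, evaluated at an admissible integer subsample size $k=\alpha n$, and then carries out three reductions in turn: truncating the spectrum, bounding the attenuation factors, and rewriting the binomial coefficients with the Gamma function.

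\textbf{Truncating the spectrum.} By Assumption~\ref{ass:finite_interaction}, $\zeta_c(x)=0$ for every $c>M$. All terms with index $c>M$ therefore vanish from the variance sum in Eq.~\eqref{eq:exact_mse}. The restriction $\alpha\ge M/n$ gives $k\ge M$, so the truncated sum runs over exactly $c=1,\dots,M$. This sum no longer has an upper limit that depends on $\alpha$, which is precisely what makes it suitable for continuous extension.

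\textbf{Bounding the attenuation factors.} Every summand is nonnegative, since $\zeta_c(x)\ge 0$ and $\binom{k}{c}\ge 0$. We may therefore replace each $\gamma_c(n,k)$ by its upper bound $\alpha^c$ from Eq.~\eqref{eq:gamma_bound} in Theorem~\ref{thm:spectral_filtering}, and the inequality is preserved term by term. The bias term $B_0k^{-2\beta}$ is simply rewritten as $B_0(\alpha n)^{-2\beta}$.

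\textbf{Rewriting the binomial coefficients.} For integer $k\ge c$ we have the exact identity $\binom{k}{c}=\Gamma(k+1)/\bigl(\Gamma(c+1)\Gamma(k-c+1)\bigr)$, so at admissible integer points the right-hand side coincides with $\mathcal{M}(\alpha;\boldsymbol{\zeta},M)$. It remains to check that $\mathcal{M}$ is well defined and differentiable on the whole interval $[M/n,1]$. For real $\alpha n\ge M\ge c$, every argument $\alpha n-c+1\ge 1$. The Gamma function is smooth and positive on $[1,\infty)$, so each summand is a smooth function of $\alpha$. The bias term is smooth because $\alpha n>0$. Hence $\mathcal{M}$ is a differentiable envelope that agrees with the bound at every integer $k=\alpha n$.

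\textbf{Main obstacle.} The statement has content only at points where $\alpha n$ is an integer, since the MSE is defined only for integer $k$. The proof must therefore make clear that "bounded by the envelope" means the inequality holds on the lattice $\{k/n : M\le k\le n\}$. Between lattice points, $\mathcal{M}$ is an analytic interpolation, and the surrogate is used only for comparative statics. The inequality direction needs no monotonicity or log-convexity properties of the Gamma function, only the exact integer identity above, so I would state this scope explicitly and avoid claiming more.
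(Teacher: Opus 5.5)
Your proposal is correct and follows the paper's proof essentially step for step. You truncate the variance sum of Lemma~\ref{lem:mse_bound} at $M$ using $\alpha n \ge M$, replace each $\gamma_c(n,\alpha n)$ by $\alpha^c$, rewrite $\binom{\alpha n}{c}$ through the Gamma function, and check that $\alpha n - c + 1 \ge 1$ keeps every Gamma argument away from the poles; your extra remarks on nonnegativity of the summands and on restricting the inequality to lattice points $\alpha = k/n$ make explicit what the paper glosses over, since it applies Lemma~\ref{lem:mse_bound} at real $\alpha n$ without comment.
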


\begin{proof}[Proof Sketch and Methodological Insight]
    The proof establishes a differentiable MSE envelope by mapping the discrete combinatorial space of subsampling to a smooth manifold via the analytic continuation of the falling factorial. The core mathematical ingenuity lies in the \textit{regularization of the Gamma function's domain}: by enforcing the structural constraint $\alpha n \ge M$, we guarantee that the arguments of the Gamma functions in the denominator remain strictly positive. This bypasses the singular poles at non-positive integers, ensuring that the surrogate objective $\mathcal{M}(\alpha; \boldsymbol{\zeta})$ is globally $C^\infty$ and analytic. Furthermore, by substituting the complex combinatorial attenuation factors with their geometric upper bounds ($\alpha^c$) derived from \textcolor{red}{Theorem \ref{thm:spectral_filtering}}, we decouple the interaction terms into a tractable power-law form. Please see \textcolor{red}{Appendix \ref{app:proof_lem_continuous_mse}} for details.
\end{proof}

To derive analytic insights into the minimizer of the generalization bound, one standard approach is to evaluate the first-order stationary condition $\partial \mathcal{M}(\alpha; \boldsymbol{\zeta}) / \partial \alpha = 0$. By applying logarithmic differentiation to the continuous Hoeffding attenuation factors, the exact stationary equation expands to:
\begin{equation} 
\label{eq:stationary_condition}
\begin{aligned}
    \frac{\partial \mathcal{M}}{\partial \alpha} &= \frac{\partial}{\partial \alpha} \left[ B_0(\alpha n)^{-2\beta} \right] + \sum_{c=1}^{M} \zeta_c(x) \frac{\partial}{\partial \alpha} \left[ H_c(\alpha) \right] \\
    &= -2\beta B_0 n^{-2\beta} \alpha^{-2\beta - 1} + \sum_{c=1}^{M} \zeta_c(x) H_c(\alpha) \left[ \frac{\partial}{\partial \alpha} \ln H_c(\alpha) \right] \\
    &= -2\beta B_0 n^{-2\beta} \alpha^{-2\beta - 1} \\
    &\quad + \sum_{c=1}^{M} \left( \alpha^c \frac{\Gamma(\alpha n + 1)}{\Gamma(c + 1)\Gamma(\alpha n - c + 1)} \right) \left[ \frac{c}{\alpha} + \sum_{j=0}^{c-1} \frac{n}{\alpha n - j} \right] \zeta_c(x) = 0.
\end{aligned}
\end{equation}
Evaluating Eq.~\eqref{eq:stationary_condition} yields a highly non-linear transcendental equation that couples fractional bias exponents ($\alpha^{-2\beta - 1}$) with logarithmic variance derivatives. Therefore, deriving an explicit closed-form analytical solution for $\alpha^*$ is mathematically intractable. To characterize the trajectory of the optimal hyperparameter without relying on explicit root-finding, we bypass local differential analysis and instead perform a monotone comparative static analysis, formalized in the following theorem.

\begin{theorem}[Structural Shift of Optimal Regularization]
\label{thm:optimal_alpha}
Suppose two base learning algorithms, $h^{(1)}$ and $h^{(2)}$, exhibit identical bias decay rates ($B_0, \beta$) but possess different Hoeffding variance spectrum sequences, denoted by $\boldsymbol{\zeta}^{(1)} = \{\zeta^{(1)}_c\}_{c=1}^M$ and $\boldsymbol{\zeta}^{(2)} = \{\zeta^{(2)}_c\}_{c=1}^M$. To isolate the effect of high-order complexity, suppose $h^{(2)}$ shares the same low-order structural variance as $h^{(1)}$ up to a critical order $c^* - 1$, but exhibits strictly greater high-order variance. Mathematically, $\zeta^{(2)}_c = \zeta^{(1)}_c$ for $c < c^*$, and $\zeta^{(2)}_c \ge \zeta^{(1)}_c$ for $c \ge c^*$ (with strict inequality for at least one $c \ge c^*$). Then, their optimal subsampling ratios satisfy:
\begin{equation*}
\alpha^*_{(2)} \le \alpha^*_{(1)}.
\end{equation*}
\end{theorem}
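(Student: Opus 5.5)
The plan is a monotone comparative statics argument in the style of Topkis, applied to the continuous surrogate $\mathcal{M}(\alpha;\boldsymbol{\zeta},M)$ of Lemma~\ref{lem:continuous_mse}. I interpret $\alpha^*_{(i)}$ as a minimizer of $\mathcal{M}(\cdot;\boldsymbol{\zeta}^{(i)},M)$ over the feasible interval $[M/n,1]$. The interval is compact and $\mathcal{M}$ is continuous there, so minimizers exist. If they are not unique, I take the largest minimizer for each learner, or equivalently compare the minimizer sets in the strong set order.

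Write $H_c(\alpha)=\alpha^c\,\Gamma(\alpha n+1)/[\Gamma(c+1)\Gamma(\alpha n-c+1)]$. Since the bias term and the terms with $c<c^*$ coincide for the two learners, the objectives differ only through the high-order terms:
\begin{equation*}
\mathcal{M}(\alpha;\boldsymbol{\zeta}^{(2)},M)-\mathcal{M}(\alpha;\boldsymbol{\zeta}^{(1)},M)=\sum_{c=c^*}^{M}\bigl(\zeta^{(2)}_c-\zeta^{(1)}_c\bigr)H_c(\alpha)\triangleq D(\alpha).
\end{equation*}

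The key step is to show that $D$ is nondecreasing on $[M/n,1]$. Its weights $\zeta^{(2)}_c-\zeta^{(1)}_c$ are nonnegative, so it suffices that each $H_c$ is nondecreasing. This follows from the logarithmic derivative already computed in Eq.~\eqref{eq:stationary_condition}:
\begin{equation*}
\frac{\partial}{\partial\alpha}\ln H_c(\alpha)=\frac{c}{\alpha}+\sum_{j=0}^{c-1}\frac{n}{\alpha n-j}.
\end{equation*}
This is strictly positive because $\alpha n\ge M\ge c>j$. Equivalently, $H_c(\alpha)=\alpha^c\prod_{j=0}^{c-1}(\alpha n-j)/c!$ is a product of positive increasing factors. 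Since at least one weight is strictly positive, $D$ is in fact strictly increasing. Hence $\mathcal{M}(\cdot;\boldsymbol{\zeta},M)$ has strictly decreasing differences in $(\alpha,\text{learner index})$.

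The conclusion then follows from the standard exchange argument. Suppose $\alpha_2\triangleq\alpha^*_{(2)}>\alpha_1\triangleq\alpha^*_{(1)}$. Optimality of each point gives
\begin{equation*}
\mathcal{M}^{(1)}(\alpha_1)\le\mathcal{M}^{(1)}(\alpha_2),\qquad \mathcal{M}^{(2)}(\alpha_2)\le\mathcal{M}^{(2)}(\alpha_1).
\end{equation*}
Adding the two inequalities yields $D(\alpha_2)\le D(\alpha_1)$, which contradicts the strict monotonicity of $D$. Therefore $\alpha^*_{(2)}\le\alpha^*_{(1)}$. Strict monotonicity also lets the argument work with arbitrary minimizers, not just the largest ones.

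The main obstacle I expect is not the algebra but the well-posedness of the objects being compared. First, the feasible interval $[M/n,1]$ must be common to both learners, which holds because the same $M$ is used. Second, the Gamma extension must be valid, that is, avoid its poles, on the whole interval; this is guaranteed by $\alpha n\ge M$ as in Lemma~\ref{lem:continuous_mse}. Third, the theorem speaks of optimal ratios, and I will state explicitly that these are minimizers of the surrogate envelope $\mathcal{M}$, not of the exact MSE. For the exact discrete bound of Lemma~\ref{lem:mse_bound}, the same argument goes through over $k\in\{M,\dots,n\}$ using $\gamma_c(n,k)\binom{k}{c}=\binom{n-c}{k-c}/\binom{n}{k}\cdot\binom{k}{c}$. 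That quantity is nondecreasing in $k$ because it equals $\binom{k}{c}^2/\binom{n}{c}$. I would mention this as a remark.
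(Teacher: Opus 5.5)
Your proposal is correct and follows essentially the paper's route. Both arguments show that the difference of the two surrogate envelopes, $\sum_{c\ge c^*}(\zeta^{(2)}_c-\zeta^{(1)}_c)H_c(\alpha)$, is strictly increasing in $\alpha$ because each $H_c$ has a positive logarithmic derivative, and your exchange argument is the proof of the Topkis theorem that the paper cites, written out (strictly speaking it is $-\mathcal{M}$ that has decreasing differences while $\mathcal{M}$ has increasing ones, but your inequalities are the right ones).

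Your explicit use of the common compact domain $[M/n,1]$ is, if anything, cleaner than the paper's appendix. The appendix optimizes over $(0,1]$ with the sum running to $n$, while still relying on $\alpha n\ge c$ to sign $H_c'$.
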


\begin{proof}[Proof Sketch and Methodological Insight]
    Characterizing the shift of the optimal subsampling ratio $\alpha^*$ via local calculus (e.g., the Implicit Function Theorem) is intractable due to the nonlinear, transcendental stationary conditions and potential boundary optima. Instead, we adopt Topkis's Theorem for monotone comparative statics \citep{topkis1978}. Topkis's Theorem is a foundational mathematical result that characterizes how optimal solutions shift in response to parameter changes by relying purely on the order-theoretic property of submodularity, entirely dispensing with the need for differentiability or interiority \citep{topkis1998}. Leveraging this theorem, we abstract the optimization landscape by defining a partial order $\succeq$ on the functional space of valid Hoeffding variance spectra $\mathcal{Z}$. By formally demonstrating that the negative continuous MSE envelope, $f(\alpha, \boldsymbol{\zeta}) = -\mathcal{M}(\alpha; \boldsymbol{\zeta})$, exhibits \textit{strictly decreasing differences} in $(\alpha, \boldsymbol{\zeta})$, we establish that the objective function is strictly submodular with respect to the continuous subsampling ratio $\alpha$ and the partially ordered complexity spectrum $\boldsymbol{\zeta}$. By Topkis’s Theorem, the optimal subsampling ratio $\alpha^*(\boldsymbol{\zeta})$ is therefore monotonically non-increasing as the spectrum shifts toward higher-order interactions. This avoids local differential analysis and establishes the shift as a global structural property of the bias–variance trade-off. Please see \textcolor{red}{Appendix \ref{app:proof_thm_opt_alpha}} for details.
\end{proof}

\begin{remark}[Critique of the Static 0.632 Rule and the Bias-Variance Trade-off]
    \textcolor{red}{Theorem~\ref{thm:optimal_alpha}} shows that a fixed resampling ratio (e.g., $\alpha \approx 0.632$) can be suboptimal across model complexities. This classical value arises from the asymptotic bootstrap exclusion probability \citep{breiman1996out,efron1997improvements} and does not reflect the bias-variance trade-off of the finite-sample. Instead, the optimal ratio $\alpha^*$ is determined by balancing nonparametric bias with geometric attenuation of the variance spectrum.
    \begin{itemize}
        \item \textbf{Bias-Dominated Regime (Low-Complexity Base Learners):} For structurally constrained models (e.g., shallow trees), the ability to form high-order interactions is limited, leading to a truncated Hoeffding spectrum with $\zeta_c(x)=0$ for large $c$ \citep{mentch2016quantifying}. In this regime, small $\alpha$ yields little variance reduction but increases nonparametric bias $\mathcal{O}((\alpha n)^{-2\beta})$. Hence, the optimal strategy favors a large subsampling ratio ($\alpha^* \to 1$) to retain sample size and control bias.
        
        \item \textbf{Variance-Dominated Regime (High-Capacity Interpolators):} For unpruned, interpolating models, bias is low but variance exhibits a heavy-tailed spectrum due to high-order sample interaction memorization \citep{wyner2017explaining,belkin2019reconciling}. \textcolor{red}{Theorem~\ref{thm:optimal_alpha}} shows that in this regime a smaller $\alpha^*$ is optimal: the increased bias is offset by $\alpha^c$, which exponentially suppresses high-order variance and mitigates overfitting.
    \end{itemize}
    Consequently, subagging acts not merely as a variance-smoothing operation, but as a parameterized regularization path. The subsampling ratio $\alpha$ must be dynamically calibrated to the intrinsic spectral complexity of the specific learning algorithm.
\end{remark}

\begin{corollary}[Strict Monotonicity for Interior Solutions]
\label{cor:strict_monotonicity}
Suppose the conditions of \textcolor{red}{Theorem~\ref{thm:optimal_alpha}} hold. If the optimal subsampling ratios are interior solutions, i.e., $\alpha^*_{(1)}, \alpha^*_{(2)} \in (0, 1)$, then the inequality is strict:
\begin{equation*}
\alpha^*_{(2)} < \alpha^*_{(1)}.
\end{equation*}
\end{corollary}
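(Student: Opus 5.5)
Our plan is to prove the corollary by looking at the first-order optimality conditions at interior optima, and then checking that the two stationarity equations cannot hold at the same point. Write $\mathcal{M}_i(\alpha) \triangleq \mathcal{M}(\alpha;\boldsymbol{\zeta}^{(i)},M)$ for $i=1,2$. By Lemma~\ref{lem:continuous_mse} both envelopes are $C^\infty$ on the admissible interval. We take Theorem~\ref{thm:optimal_alpha} as given, so $\alpha^*_{(2)} \le \alpha^*_{(1)}$, and we only need to rule out equality. Suppose, for contradiction, that $\alpha^*_{(1)} = \alpha^*_{(2)} = \bar\alpha$, with $\bar\alpha$ interior. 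Since each $\alpha^*_{(i)}$ minimizes the differentiable function $\mathcal{M}_i$ at an interior point, Fermat's rule gives $\mathcal{M}_1'(\bar\alpha) = 0$ and $\mathcal{M}_2'(\bar\alpha)=0$. Here interiority must be read relative to the admissible domain $[M/n,1]$, so that the derivatives exist on both sides.

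Next we subtract the two stationary conditions in Eq.~\eqref{eq:stationary_condition}. The bias term $-2\beta B_0 n^{-2\beta}\alpha^{-2\beta-1}$ is common to both learners, and so are the components with $c<c^*$. What remains is
\begin{equation*}
0 = \mathcal{M}_2'(\bar\alpha) - \mathcal{M}_1'(\bar\alpha) = \sum_{c=c^*}^{M} \bigl(\zeta^{(2)}_c - \zeta^{(1)}_c\bigr) H_c(\bar\alpha)\Bigl[\frac{c}{\bar\alpha} + \sum_{j=0}^{c-1}\frac{n}{\bar\alpha n - j}\Bigr],
\end{equation*}
where $H_c(\alpha) = \alpha^c \Gamma(\alpha n+1)/(\Gamma(c+1)\Gamma(\alpha n - c+1))$.

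The key step is to show that every bracket and every $H_c(\bar\alpha)$ is strictly positive for $1\le c\le M$. This is where the domain restriction $\bar\alpha n \ge M$ from Lemma~\ref{lem:continuous_mse} is used. It gives $\bar\alpha n - j \ge M - (c-1) > 0$ for $j\le c-1$, and it gives $\Gamma(\bar\alpha n - c+1) > 0$ because the argument is at least $1$. Hence each term in the difference is nonnegative, and at least one is strictly positive, because $\zeta^{(2)}_c > \zeta^{(1)}_c$ for some $c\ge c^*$. The difference is therefore strictly positive, which contradicts its being zero, and so $\alpha^*_{(2)} \ne \alpha^*_{(1)}$. Combined with Theorem~\ref{thm:optimal_alpha}, this yields $\alpha^*_{(2)} < \alpha^*_{(1)}$.

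The expected obstacle is the interiority and well-posedness bookkeeping. The statement says $(0,1)$, but the envelope is only defined for $\alpha\ge M/n$. If $\alpha^*_{(1)} = M/n$, only a one-sided derivative condition holds, and the contradiction argument would need the one-sided inequality $\mathcal{M}_2'(\bar\alpha)\ge 0$ instead; this would not immediately conflict. We would therefore state explicitly that interior means interior to the feasible set $(M/n,1)$. If the optimizers are set-valued, we would apply the argument to the selections that Theorem~\ref{thm:optimal_alpha} compares. A more robust alternative, if needed, is to use the strictly decreasing differences established in the proof of Theorem~\ref{thm:optimal_alpha} together with the strict monotone comparative statics result for interior optima of Edlin--Shannon type. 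The derivative computation above is exactly the verification of that theorem's hypothesis.
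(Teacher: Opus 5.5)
Your proof is correct and is essentially the paper's argument. The paper combines the weak inequality of Theorem~\ref{thm:optimal_alpha} with the strictly decreasing differences of $-\mathcal{M}$ and cites Edlin and Shannon, and their result rests on exactly the first-order-condition contradiction at an interior optimum that you carry out explicitly. Your insistence that interiority be read relative to the feasible set $[M/n,1]$ rather than $(0,1)$ is well founded: if $\alpha^*_{(1)}=M/n$, the weak inequality already forces $\alpha^*_{(2)}=M/n$, so strictness fails.
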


\begin{proof}
    \textcolor{red}{Theorem \ref{thm:optimal_alpha}} establishes the weak inequality $\alpha^*_{(2)} \le \alpha^*_{(1)}$. The continuous objective $f(\alpha, \boldsymbol{\zeta}) = -\mathcal{M}(\alpha; \boldsymbol{\zeta})$ exhibits strictly decreasing differences in $(\alpha, \boldsymbol{\zeta})$. According to Edlin and Shannon's Theorem \citep{edlin1998strict}, yielding the strict inequality $\alpha^*_{(2)} < \alpha^*_{(1)}$. 
\end{proof}

\textcolor{red}{Corollary~\ref{cor:strict_monotonicity}} indicates that, provided the optimal solutions do not bind at the boundaries, an increase in high-order variance components requires a smaller subsampling ratio to minimize the surrogate MSE envelope.

\section{Concrete Case Study: The Bi-Modal Spectrum Learner}
\label{sec:case_study}

In this section, we analyze a stylized case study to illustrate the implications of the spectral filtering framework. By employing a mathematically explicit proxy for high-capacity estimators, we characterize \textbf{\textit{how the maximum interaction order, representing a structural proxy for the depth of an unpruned decision tree, governs the behavior of the optimal subsampling ratio}}.

\subsection{Analytical Formulation of the Spectrum}

Consider a base learner whose functional complexity is concentrated exclusively at two extremes: a robust linear main effect ($c=1$) and a highly complex, spurious multi-sample interaction of high order ($c=M$, where $1 \ll M \le k$). This represents a \textit{pure interpolator} that captures the global signal but also memorizes high-frequency noise configurations. We formalize this by defining a \textit{Bi-modal Hoeffding Spectrum}:
\begin{equation}
\label{eq:bimodal_spectrum}
\zeta_c(x) = 
\begin{cases} 
\sigma_1^2 \binom{k}{1}^{-1}, & \text{if } c = 1, \\
\sigma_M^2 \binom{k}{M}^{-1}, & \text{if } c = M, \\
0, & \text{otherwise}.
\end{cases}
\end{equation}
Here, $\sigma_1^2 > 0$ and $\sigma_M^2 > 0$ represent the invariant total variance contributions of the signal and the overfitting noise, respectively. Therefore, the total variance of the standalone base learner trained on $k$ samples is identically constant: $\text{Var}(h(x; Z_{1:k})) = \sigma_1^2 + \sigma_M^2$, which strictly satisfies the $\mathcal{O}(1)$ interpolation variance regime.

\subsection{Exact Filtered MSE Envelope}

We apply \textcolor{red}{Theorem~\ref{thm:spectral_filtering}} and \textcolor{red}{Assumption \ref{ass:mon_bias_decay}} ($\mathcal{B}(k)^2 = B_0 k^{-2\beta}$) to this bi-modal learner. Substituting Eq.~\eqref{eq:bimodal_spectrum} into Eq.~\eqref{eq:continuous_mse}, the continuous finite-sample MSE simplifies into an explicit algebraic polynomial:
\begin{align}
    \mathcal{M}(\alpha; M) &= B_0(\alpha n)^{-2\beta} + \gamma_1(\alpha n, n)\sigma_1^2 + \gamma_M(\alpha n, n)\sigma_M^2 \nonumber \\
    &= B_0 n^{-2\beta} \alpha^{-2\beta} + \sigma_1^2 \alpha + \sigma_M^2 \alpha^M \left( 1 + \mathcal{O}\left(\frac{M^2}{n}\right) \right). \label{eq:bimodal_mse}
\end{align}
Eq.~\eqref{eq:bimodal_mse} distills the core mechanics of ensemble learning: the low-order signal variance is attenuated linearly ($\alpha$), while the high-order interpolation noise is subjected to a severe geometric penalty ($\alpha^M$).

\subsection{Closed-Form Scaling of the Optimal Subsampling Ratio}

To find the optimal regularization parameter $\alpha^*$, we differentiate $\mathcal{M}(\alpha; M)$ with respect to $\alpha$ and set it to zero:
\begin{equation} 
\label{eq:derivative_mse}
    \frac{\partial \mathcal{M}}{\partial \alpha} = -2\beta B_0 n^{-2\beta} \alpha^{-2\beta-1} + \sigma_1^2 + M \sigma_M^2 \alpha^{M-1} = 0.
\end{equation}
This equation elucidates the precise trade-off mechanism. The marginal reduction in bias (the negative term) must be balanced by the marginal increase in variance. Importantly, we evaluate two operational regimes.

\textbf{Regime 1: Low-Complexity Base Learner ($M \to 1$).} If the base learner is highly regularized (e.g., a shallow tree stump), the high-order complexity vanishes. The optimal ratio $\alpha^*$ is determined primarily by balancing the bias against the linear variance $\sigma_1^2$. The optimal $\alpha^*$ is structurally large, reflecting that smaller subsampling is unnecessary.

\textbf{Regime 2: High-Complexity Regime ($\sigma_M^2 \gg \sigma_1^2$).} Consider an interpolating base learner where the prediction variance is concentrated in high-order sample interactions ($\sigma_M^2 \gg \sigma_1^2$), representing the memorization of spurious high-frequency noise. By balancing the bias-variance trade-off and equating the dominant high-order noise derivative with the bias derivative in Eq.~(\ref{eq:derivative_mse}), we obtain the asymptotic relationship for the optimal subsampling ratio:
\begin{equation*}
    \alpha^* \sim \mathcal{O}\left( \left( \sigma_M^2 \right)^{-\frac{1}{M+2\beta}} \right).
\end{equation*}
This result demonstrates that as the variance contribution from high-order interactions diverges, the optimal subsampling ratio must asymptotically vanish ($\alpha^* \to 0$) to maintain an optimal bias-variance trade-off. 

\begin{remark}
    While the magnitude of the high-order noise $\sigma_M^2$ encourages a reduction in $\alpha^*$, the interaction depth $M$ governs the sensitivity of the regularization path. Due to the geometric nature of the attenuation factor $\alpha^M$, a large interaction order $M$ (characteristic of unpruned decision trees where $M \sim \mathcal{O}(k)$) functions as a sharp spectral filter. Mathematically, a subsampling ratio even moderately below unity (e.g., $\alpha = 0.632$ or $0.8$) results in the substantial suppression of high-order variance components (e.g., $0.63^{100} \approx 10^{-20}$). This provides a theoretical grounding for the empirical effectiveness of classical heuristic ratios in the context of Random Forests \citep{breiman2001random}. The rapid decay of the Hoeffding filter at high orders suggests that a moderate subsampling ratio can effectively attenuate interpolation noise without necessitating the substantial reduction of $\alpha$ that would otherwise lead to a prohibitive increase in bias.
\end{remark}

\section{Algorithm Design and Theoretical Verification}
\label{sec:exp_res}

In this section, we operationalize our theoretical framework to empirically investigate the structural shift of the optimal subsampling ratio $\alpha^*$. We introduce Complexity-Guided Adaptive Subsampling (CGAS), a theoretically principled algorithm that adapts the insights of \textcolor{red}{Theorem \ref{thm:spectral_filtering}} into practice. CGAS algorithm uses the base learner's structural capacity as a proxy to constrain the search space for $\alpha$. To examine the comparative statics established in \textcolor{red}{Theorem \ref{thm:optimal_alpha}}, we design a controlled evaluation spanning two functional complexity regimes: low-complexity smoothers and high-complexity interpolators. Across diverse benchmark datasets, we evaluate the generalization performance of the dynamically calibrated $\alpha^*$ against standard fixed-rate resampling heuristics.

\begin{algorithm}[ht]
\caption{Complexity-Guided Adaptive Subsampling (CGAS)}
\label{alg:cgas}
\begin{algorithmic}[1]
\STATE \textbf{Input:} Training data $\mathcal{D}_n$, base learning algorithm $h$ with maximum complexity proxy $d_{max}$ (e.g., max\_depth), pilot ensemble size $B_{search}$ (e.g., 30), final ensemble size $B_{final}$ (e.g., 100), number of CV folds $K$ (e.g., 3).
\STATE \textbf{Phase 1: Theoretical Grid Restriction}
\IF{$d_{max}$ is unconstrained or $d_{max} > 10$} 
    \STATE \COMMENT{High-Complexity Regime: Severe high-order interpolation noise detected.}
    \STATE \COMMENT{Necessitates stringent geometric filtering to annihilate noise.}
    \STATE Initialize search grid: $\mathcal{A} \leftarrow \{0.1, 0.2, 0.3, 0.4\}$.
\ELSE
    \STATE \COMMENT{Low-Complexity Regime: Variance concentrated in lower-order structural signals.}
    \STATE \COMMENT{Requires minimal variance penalty to prevent bias explosion.}
    \STATE Initialize search grid: $\mathcal{A} \leftarrow \{0.6, 0.7, 0.8, 0.9, 0.95\}$.
\ENDIF
\STATE \textbf{Phase 2: Internal $K$-Fold Cross-Validation}
\STATE Initialize $\alpha^* \leftarrow 0.632$ and $MSE_{best} \leftarrow \infty$.
\STATE Split $\mathcal{D}_n$ into $K$ strictly disjoint folds.
\FOR{each $\alpha \in \mathcal{A}$}
    \FOR{$k = 1$ to $K$}
        \STATE Train pilot subagged ensemble (size $B_{search}$) on training folds using ratio $\alpha$ without replacement.
        \STATE Compute Mean Squared Error ($MSE_k$) on the validation fold.
    \ENDFOR
    \STATE Compute $\overline{MSE}_\alpha \leftarrow \frac{1}{K} \sum_{k=1}^K MSE_k$.
    \IF{$\overline{MSE}_\alpha < MSE_{best}$}
        \STATE $MSE_{best} \leftarrow \overline{MSE}_\alpha$
        \STATE $\alpha^* \leftarrow \alpha$
    \ENDIF
\ENDFOR
\STATE \textbf{Phase 3: Final Ensemble Training}
\STATE Train the final subagged ensemble of size $B_{final}$ on the entire dataset $\mathcal{D}_n$ using the optimal ratio $\alpha^*$.
\STATE \textbf{Output:} The trained final ensemble and the optimal regularization parameter $\alpha^*$.
\end{algorithmic}
\end{algorithm}

\subsection{Bridging Theory and Practice: A Complexity-Guided Adaptive Heuristic}

While \textcolor{red}{Theorem \ref{thm:optimal_alpha}} characterizes the structural shift in optimal regularization, determining the exact optimizer $\alpha^{*}$ is analytically and computationally intractable. This computation necessitates access to the unknown nonparametric bias derivative and the complete Hoeffding variance spectrum $\{\zeta_c\}_{c=1}^k$. Although recent advances in infinite-order U-statistics \citep{wager2018estimation, xu2024variance} permit the empirical estimation of low-order variance components via the infinitesimal jackknife, recovering the entire spectrum incurs prohibitive computational complexity. To operationalize the insights from \textcolor{red}{Theorem \ref{thm:spectral_filtering}} without requiring explicit spectrum estimation, we introduce CGAS algorithm, an empirical procedure detailed in \textcolor{red}{Algorithm \ref{alg:cgas}}.

Rather than estimating the empirical spectrum, CGAS utilizes a structural hyperparameter of the base learner (e.g., maximum tree depth $d_{\max}$) as a proxy for the effective maximum interaction order. By construction, heavily regularized models (e.g., $d_{\max} \le 10$) constrain the capacity to form multi-sample interactions ($\zeta_c \approx 0$ for large $c$), whereas unpruned interpolators distribute substantial variance mass to high-order components. Guided by the monotone comparative statics established in \textcolor{red}{Theorem \ref{thm:optimal_alpha}}, CGAS restricts the search grid for $\alpha$ based on this complexity proxy and selects an empirical optimum using internal $K$-fold cross-validation. $K$-fold CV is employed instead of Out-of-Bag (OOB) estimation to mitigate the variance instability characteristic of OOB errors when evaluating small pilot ensembles ($B_{\text{search}}$). By conditioning the search space on the structural complexity of the base learner, the CGAS algorithm avoids the suboptimality inherent in adopting fixed resampling heuristics (e.g., $\alpha \approx 0.632$) irrespective of the model's capacity. Consequently, it operationalizes the subsampling ratio as a complexity-dependent tuning parameter rather than a static variance-reduction constant.

\subsection{Experimental Design and Verification Protocol}

To empirically investigate \textbf{\textit{how the optimal subsampling ratio $\alpha^*$ varies in response to the base learner's interaction complexity}}, we formulated a controlled experimental setup. 

\paragraph{Functional Complexity Regimes} 
We designed two distinct experimental regimes corresponding to the opposite ends of the Hoeffding complexity spectrum:
\begin{itemize}
    \item \textbf{Regime 1: Low-Complexity Smoothers (Shallow Trees on Original Data).} We constructed a low-complexity regime by constraining the base learners to a maximum tree depth of $d_{\max} = 3$ and training them on the unmodified target distributions. Under these structural constraints, the models possess limited capacity to form complex multi-sample interactions, confining their Hoeffding variance spectrum primarily to low-order main effects ($\zeta_1, \zeta_2$). Therefore, high-order variance components are negligible, and the bias-variance trade-off is governed predominantly by the nonparametric bias.
    \item \textbf{Regime 2: High-Complexity Interpolators (Deep Trees with Label Noise).} To induce interpolation behavior and amplify high-order variance, we configured the base learners as fully unpruned decision trees ($d_{\max} = \infty$) and injected additive Gaussian noise (scaled to $1.5 \sigma_y$) into the target variable $y$. In this regime, the trees act as interpolators that memorize the label noise, distributing substantial variance mass across high-order interaction components.
\end{itemize}

\paragraph{Datasets}
We evaluated our theoretical framework across a diverse suite of nine public datasets sourced from OpenML, encompassing a wide range of sample sizes and feature dimensions to ensure robust empirical validation. Specifically, the benchmark included \textit{yprop\_4\_1} ($8885$ samples, $252$ features), \textit{wine\_quality} ($6497$ samples, $12$ features), \textit{superconduct} ($21263$ samples, $82$ features), \textit{dataset\_sale} ($10738$ samples, $15$ features), \textit{coli\_20} ($1440$ samples, $1025$ features), \textit{elevators} ($16599$ samples, $17$ features), \textit{sarcos} ($44484$ samples, $22$ features), \textit{california\_housing} ($20640$ samples, $10$ features), and \textit{2dplanes} ($40768$ samples, $11$ features). This diverse collection ensures that our empirical analysis tests the learning dynamics across various data scales, from standard low-dimensional regression tasks to extreme over-parameterized regimes where functional complexity heavily influences the generalization error.

\paragraph{Baselines}
To strictly ablate the effect of the combinatorial resampling mechanism, all methods were initialized with \texttt{random\_state=42} and \texttt{n\_estimators=100}, and benchmark our proposed CGAS against two Subagging variants utilizing universally adopted heuristic sampling fractions without replacement, namely Subagging ($0.632$) and Subagging ($0.8$).

\paragraph{Metrics and Evaluation Protocol}
To guarantee the statistical robustness of our comparative statics, we performed $5 \times 10$ repeated K-Fold Cross-Validation ($5$ repeats of strictly disjoint $10$-folds with varying random seeds). We reported the out-of-sample Mean Squared Error (MSE) and Mean Absolute Error (MAE). Furthermore, to rigorously assert the superiority of the dynamically calibrated $\alpha^*$, we performed a one-sided Wilcoxon Signed-Rank Test across the CV iterations, testing the null hypothesis that the baselines perform equally to or better than CGAS ($p < 0.001$ indicates statistical significance).

\paragraph{Environmental Setting}
All experiments were conducted locally using Jupyter Notebook in an Anaconda environment on Windows 10 (Version 10.0.19045). The hardware infrastructure utilized an Intel processor (Intel64 Family 6 Model 151 Stepping 2, GenuineIntel) equipped with 20 logical cores. The software stack was built on Python 3.12.4, utilizing the following key libraries: pandas 2.2.2, numpy 1.26.4, scikit-learn 1.7.2, and statsmodels 0.14.2.

\subsection{Empirical Results and Analysis}

The empirical analysis is structured to evaluate four aspects: the structural shift of $\alpha^*$, the generalization performance against fixed-rate subagging, the applicability to alternative symmetric base learner, and the efficacy of complexity-guided sampling against standard bootstrap.

\begin{figure}[htbp]
    \centering
    \includegraphics[width=0.85\textwidth]{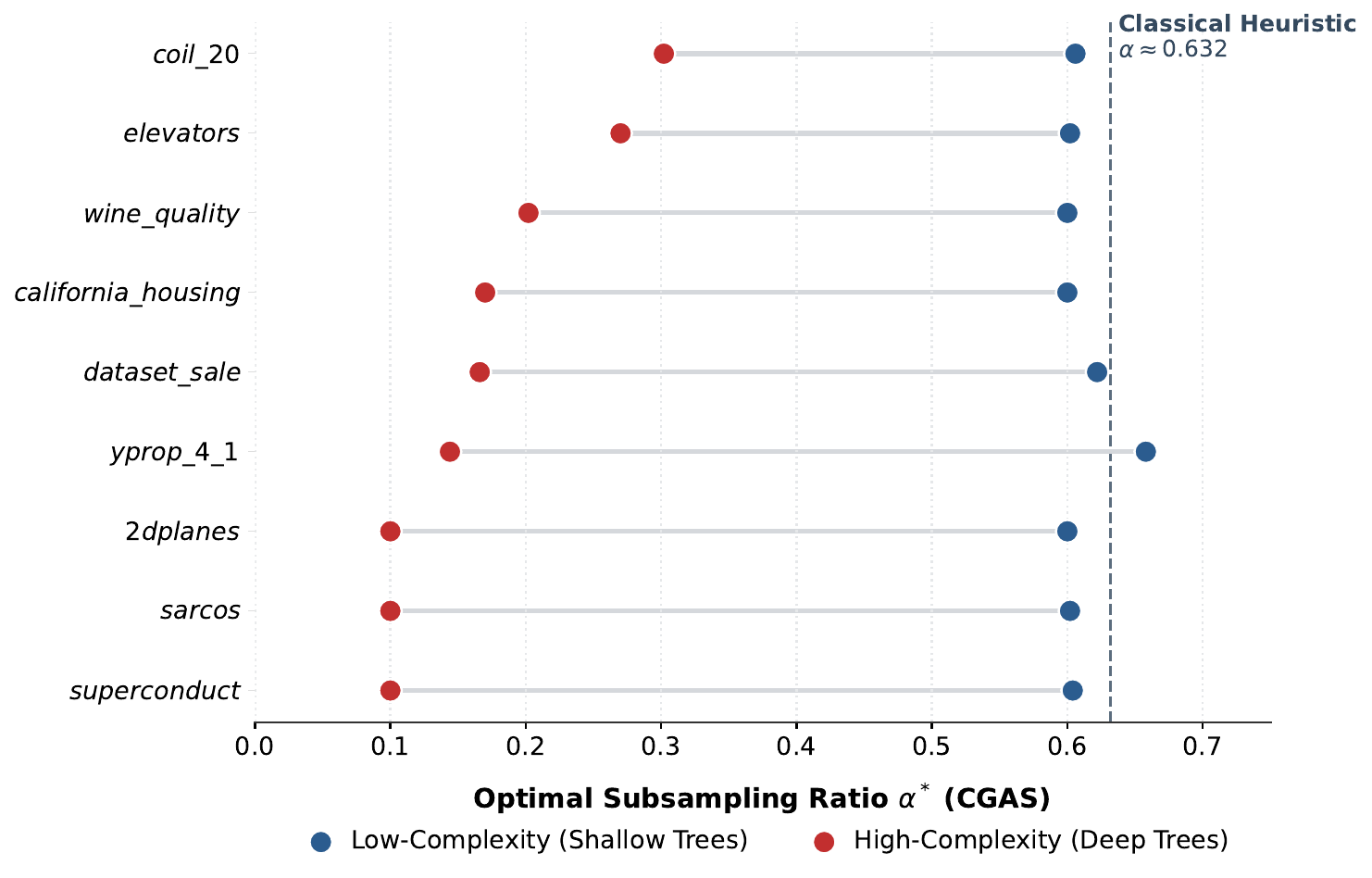}
    \vspace{-0.5em}
    \caption{Structural shift of the optimal subsampling ratio $\alpha^*$ across nine evaluated datasets. Blue markers represent the low-complexity regime (shallow trees), clustering near the classical heuristic $\alpha \approx 0.632$ (dotted line) to preserve structural signals. Red markers represent the high-complexity regime (unpruned deep trees), demonstrating a systematic shift toward smaller subsampling ratios. This empirical shift corroborates the comparative statics in \textcolor{red}{Theorem \ref{thm:optimal_alpha}}, highlighting the necessity of stringent geometric filtering for high-capacity interpolators.}
    \label{fig:alpha_shift}
\end{figure}

\subsubsection{Structural Shift of the Optimal Subsampling Ratio}
\label{sec:result_alpha_shift}

\textcolor{red}{Figure \ref{fig:alpha_shift}} shows the distribution of the optimal subsampling ratio $\alpha^*$ identified by CGAS across datasets under two complexity regimes. The results indicated that $\alpha^*$ is not fixed but shifts systematically with model complexity. In Regime 1 (low complexity), $\alpha^*$ remained close to the classical range ($0.60$-$0.65$). In Regime 2 (high complexity), it decreased significantly, concentrating within $0.1$-$0.3$. This behavior supports \textcolor{red}{Theorem \ref{thm:optimal_alpha}}. Shallow trees, with limited capacity, require larger $\alpha$ to retain low-order structure and avoid bias. In contrast, deep unpruned trees act as interpolators, capturing high-order variance; thus, a much smaller $\alpha$ is needed to strongly attenuate high-frequency noise and prevent overfitting.

\begin{table}[htbp]
\centering
\caption{Out-of-sample MSE and MAE across various datasets under two distinct functional complexity regimes. The best performance metrics in each regime are highlighted in \textbf{bold}.}
\label{tab:results}
\resizebox{\textwidth}{!}{
\begin{tabular}{cccccccc}
\toprule
\multirow{2}{*}{Datasets} & \multirow{2}{*}{Metric} & \multicolumn{3}{c}{Regime 1 (Low-Complexity)} & \multicolumn{3}{c}{Regime 2 (High-Complexity)} \\
\cmidrule(lr){3-5} \cmidrule(lr){6-8}
& & 0.632 & 0.8 & CGAS & 0.632 & 0.8 & CGAS \\
\midrule
\multirow{2}{*}{yprop\_4\_1} 
& MSE & 0.9672$^*$ & 0.9686$^*$ & \textbf{0.9670} & 1.0192$^*$ & 1.0427$^*$ & \textbf{0.9923} \\
& MAE & \textbf{0.6605} & 0.6610$^*$ & \textbf{0.6605} & 0.7851$^*$ & 0.7936$^*$ & \textbf{0.7745} \\
\midrule
\multirow{2}{*}{wine\_quality} 
& MSE & 0.7231$^*$ & 0.7276$^*$ & \textbf{0.7221} & 0.9326$^*$ & 0.9821$^*$ & \textbf{0.8843} \\
& MAE & 0.6671$^*$ & 0.6695$^*$ & \textbf{0.6666} & 0.7663$^*$ & 0.7859$^*$ & \textbf{0.7457} \\
\midrule
\multirow{2}{*}{superconduct} 
& MSE & 0.2881$^*$ & 0.2908$^*$ & \textbf{0.2876} & 0.8628$^*$ & 0.9650$^*$ & \textbf{0.7468} \\
& MAE & 0.3745$^*$ & 0.3763$^*$ & \textbf{0.3742} & 0.7398$^*$ & 0.7603$^*$ & \textbf{0.6886} \\
\midrule
\multirow{2}{*}{dataset\_sale} 
& MSE & 0.4051$^*$ & 0.4077$^*$ & \textbf{0.4043} & 0.8538$^*$ & 0.8973$^*$ & \textbf{0.8019} \\
& MAE & \textbf{0.3825} & 0.3832$^*$ & \textbf{0.3825} & 0.7350$^*$ & 0.7530$^*$ & \textbf{0.7106} \\
\midrule
\multirow{2}{*}{coil\_20} 
& MSE & 0.3602$^*$ & 0.3762$^*$ & \textbf{0.3582} & 0.7586$^*$ & 0.7826$^*$ & \textbf{0.7314} \\
& MAE & 0.4546$^*$ & 0.4586$^*$ & \textbf{0.4542} & 0.6956$^*$ & 0.7069$^*$ & \textbf{0.6839} \\
\midrule
\multirow{2}{*}{elevators} 
& MSE & 0.4809$^*$ & 0.4896$^*$ & \textbf{0.4796} & 0.7902$^*$ & 0.8078$^*$ & \textbf{0.7726} \\
& MAE & 0.4575$^*$ & 0.4614$^*$ & \textbf{0.4570} & 0.7080$^*$ & 0.7160$^*$ & \textbf{0.7002} \\
\midrule
\multirow{2}{*}{california\_housing} 
& MSE & 0.4488$^*$ & 0.4569$^*$ & \textbf{0.4472} & 0.7979$^*$ & 0.8125$^*$ & \textbf{0.7832} \\
& MAE & 0.5021$^*$ & 0.5068$^*$ & \textbf{0.5012} & 0.7107$^*$ & 0.7172$^*$ & \textbf{0.7048} \\
\midrule
\multirow{2}{*}{sarcos} 
& MSE & 0.3741$^*$ & 0.3804$^*$ & \textbf{0.3729} & 0.7796$^*$ & 0.8079$^*$ & \textbf{0.7229} \\
& MAE & 0.4573$^*$ & 0.4619$^*$ & \textbf{0.4563} & 0.7036$^*$ & 0.7157$^*$ & \textbf{0.6779} \\
\midrule
\multirow{2}{*}{2dplanes} 
& MSE & 0.1793$^*$ & 0.1831$^*$ & \textbf{0.1786} & 0.9314$^*$ & 1.0197$^*$ & \textbf{0.7387} \\
& MAE & 0.3446$^*$ & 0.3484$^*$ & \textbf{0.3438} & 0.7681$^*$ & 0.8028$^*$ & \textbf{0.6856} \\
\bottomrule
\end{tabular}
}
\textbf{Note:} All baselines marked with $*$ are significantly different from CGAS according to the Wilcoxon signed-rank test, where $^*$ denotes $p<0.001$.
\end{table}

\subsubsection{Performance Evaluation Against Fixed-Rate Subagging}
\label{sec:result_subagging}

\textcolor{red}{Table \ref{tab:results}} reports the generalization performance of different subsampling strategies without replacement, providing consistent empirical support for the theoretical mechanisms in \textcolor{red}{Theorem \ref{thm:spectral_filtering}} and \textcolor{red}{Theorem \ref{thm:optimal_alpha}}. In \textbf{Regime 2 (High-Complexity)}, unpruned deep trees exhibit interpolating behavior. Consequently, their heavy-tailed Hoeffding spectra distribute substantial variance mass to high-order interactions. Standard fixed heuristics (e.g., $\alpha \approx 0.632$ or $0.8$) provide insufficient geometric attenuation here, elevating generalization error. Guided by the complexity proxy, CGAS adaptively selected smaller subsampling ratios to suppress high-frequency noise, achieving consistently lower out-of-sample MSE and MAE across all datasets. For instance, it reduced MSE by $20.7\%$ relative to the $0.632$ baseline on the \textit{2dplanes} dataset. In \textbf{Regime 1 (Low-Complexity)}, heavily regularized shallow trees constrain variance primarily to low-order main effects. Since high-order noise is negligible, imposing stringent geometric penalties via a small $\alpha$ unnecessarily inflates nonparametric bias. CGAS correctly shifted toward larger ratios, preserving structural signals and preventing severe bias inflation. It yielded performance either statistically comparable or superior (Wilcoxon signed-rank test, $p < 0.001$) to fixed-rate baselines, defaulting safely to minimal regularization. In summary, these findings demonstrate that static resampling constants are generally suboptimal for high-capacity models. The subsampling fraction $\alpha$ must instead be operationalized as a dynamic regularization parameter, strictly calibrated to the learning algorithm's spectral complexity.

\subsubsection{Generalization to Alternative Symmetric Learners}

Since the variance identity (\textcolor{red}{Theorem \ref{thm:spectral_filtering}}) applies to any permutation-invariant base learner (\textcolor{red}{Assumption \ref{ass:sym_sq_int}}), we evaluated $K$-Nearest Neighbors (KNN) to verify that the spectral filtering mechanism operates independently of the specific algorithm. Following the established protocol, we configured KNN into two regimes: in \textbf{Regime 1 (Low-Complexity)}, a smoothed KNN ($K=20$) trained on original data restricts the Hoeffding spectrum to low-order components; in \textbf{Regime 2 (High-Complexity)}, an interpolating 1-NN ($K=1$) trained on noisy targets yields a heavy-tailed variance spectrum driven by high-order multi-sample interactions.

\textcolor{red}{Table \ref{tab:knn_results}} details the generalization performance of subagged KNN ensembles, corroborating the comparative statics of \textcolor{red}{Theorem \ref{thm:optimal_alpha}}. For high-complexity 1-NN interpolators, static heuristics ($\alpha \in \{0.632, 0.8\}$) systematically under-regularized the models. By applying a stricter geometric penalty via smaller $\alpha$, CGAS suppressed high-frequency noise and mitigated severe error inflation, reducing out-of-sample MSE by 20.7\% to 27.1\% relative to the 0.632 baseline across the datasets (e.g., from 1.1394 to 0.8308 on \textit{california\_housing}). Conversely, for low-complexity smoothed KNNs, CGAS avoided nonparametric bias inflation by adaptively favoring larger subsampling ratios to preserve low-order structural signals. It performed comparably or superiorly to fixed baselines, barring a marginal MSE exception on the \textit{wine\_quality} dataset (0.6420 vs. 0.6415 for $\alpha=0.8$). The parallel learning dynamics observed across decision trees and KNN provide empirical evidence that subagging operates as an algorithm-agnostic spectral filter, indicating that the suboptimality of fixed resampling fractions is a property of finite-sample structural complexity rather than specific model architectures.

\begin{table}[htbp]
\centering
\caption{Out-of-sample MSE and MAE for \textbf{KNN} across various datasets under two distinct functional complexity regimes. The best performance metrics in each regime are highlighted in \textbf{bold}.}
\label{tab:knn_results}
\resizebox{\textwidth}{!}{
\begin{tabular}{cccccccc}
\toprule
\multirow{2}{*}{Datasets} & \multirow{2}{*}{Metric} & \multicolumn{3}{c}{Regime 1 (Low-Complexity)} & \multicolumn{3}{c}{Regime 2 (High-Complexity)} \\
\cmidrule(lr){3-5} \cmidrule(lr){6-8}
& & 0.632 & 0.8 & CGAS & 0.632 & 0.8 & CGAS \\
\midrule
\multirow{2}{*}{yprop\_4\_1} 
& MSE & 0.9649$^*$ & 0.9660$^*$ & \textbf{0.9647} & 1.3692$^*$ & 1.5465$^*$ & \textbf{1.0295} \\
& MAE & 0.6622$^*$ & 0.6647$^*$ & \textbf{0.6618} & 0.9137$^*$ & 0.9724$^*$ & \textbf{0.7900} \\
\midrule
\multirow{2}{*}{wine\_quality} 
& MSE & 0.6449$^*$ & \textbf{0.6415} & 0.6420 & 1.1593$^*$ & 1,2971$^*$ & \textbf{0.9195} \\
& MAE & 0.6297$^*$ & \textbf{0.6267} & \textbf{0.6267} & 0.8543$^*$ & 0.9031$^*$ & \textbf{0.7608} \\
\midrule
\multirow{2}{*}{superconduct} 
& MSE & 0.1449$^*$ & 0.1360$^*$ & \textbf{0.1304} & 0.9728$^*$ & 1.0731$^*$ & \textbf{0.7637} \\
& MAE & 0.2316$^*$ & 0.2220$^*$ & \textbf{0.2157} & 0.7863$^*$ & 0.8244$^*$ & \textbf{0.6968} \\
\midrule
\multirow{2}{*}{dataset\_sale} 
& MSE & 0.5321$^*$ & 0.5273$^*$ & \textbf{0.5230} & 1.1904$^*$ & 1.3515$^*$ & \textbf{0.8832} \\
& MAE & 0.4502$^*$ & 0.4484$^*$ & \textbf{0.4475} & 0.8658$^*$ & 0.9240$^*$ & \textbf{0.7404} \\
\midrule
\multirow{2}{*}{coil\_20} 
& MSE & 0.3312$^*$ & 0.2932$^*$ & \textbf{0.2744} & 0.9872$^*$ & 1.1251$^*$ & \textbf{0.7569} \\
& MAE & 0.3309$^*$ & 0.2928$^*$ & \textbf{0.2717} & 0.7952$^*$ & 0.8496$^*$ & \textbf{0.6946} \\
\midrule
\multirow{2}{*}{elevators} 
& MSE & 0.3454$^*$ & 0.3373$^*$ & \textbf{0.3321} & 1.1401$^*$ & 1.2973$^*$ & \textbf{0.8376} \\
& MAE & 0.3715$^*$ & 0.3679$^*$ & \textbf{0.3657} & 0.8478$^*$ & 0.9049$^*$ & \textbf{0.7261} \\
\midrule
\multirow{2}{*}{california\_housing} 
& MSE & 0.3103$^*$ & 0.3059$^*$ & \textbf{0.3036} & 1.1394$^*$ & 1.2981$^*$ & \textbf{0.8308} \\
& MAE & 0.3862$^*$ & 0.3820$^*$ & \textbf{0.3796} & 0.8484$^*$ & 0.9055$^*$ & \textbf{0.7243} \\
\midrule
\multirow{2}{*}{sarcos} 
& MSE & 0.0597$^*$ & 0.0511$^*$ & \textbf{0.0460} & 1.0263$^*$ & 1.1686$^*$ & \textbf{0.7502} \\
& MAE & 0.1634$^*$ & 0.1505$^*$ & \textbf{0.1425} & 0.78072$^*$ & 0.8613$^*$ & \textbf{0.6914} \\
\midrule
\multirow{2}{*}{2dplanes} 
& MSE & 0.0713$^*$ & 0.0729$^*$ & \textbf{0.0710} & 1.0173$^*$ & 1.1257$^*$ & \textbf{0.7579} \\
& MAE & 0.2137$^*$ & 0.2160$^*$ & \textbf{0.2132} & 0.8045$^*$ & 0.8451$^*$ & \textbf{0.6944} \\
\bottomrule
\end{tabular}
}
\textbf{Note:} All baselines marked with $*$ are significantly different from CGAS according to the Wilcoxon signed-rank test, where $^*$ denotes $p<0.001$.
\end{table}

\subsubsection{Efficacy Against Standard Bootstrap (Random Forests)}
\label{sec:result_bootstrap}

To evaluate the practical implications of our framework, we conducted a controlled comparison between the standard Random Forest ($\text{RF}$) \citep{breiman2001random} and a complexity-calibrated variant ($\text{RF}^*$). The baseline $\text{RF}$ employs default bootstrap sampling ($n$ draws with replacement), whereas $\text{RF}^*$ preserves the bootstrap mechanism but constrains the subsample size via \texttt{max\_samples=}$\alpha^*$, where $\alpha^*$ is determined by the CGAS algorithm.

\begin{figure}[htbp]
    \centering
    \includegraphics[width=\textwidth]{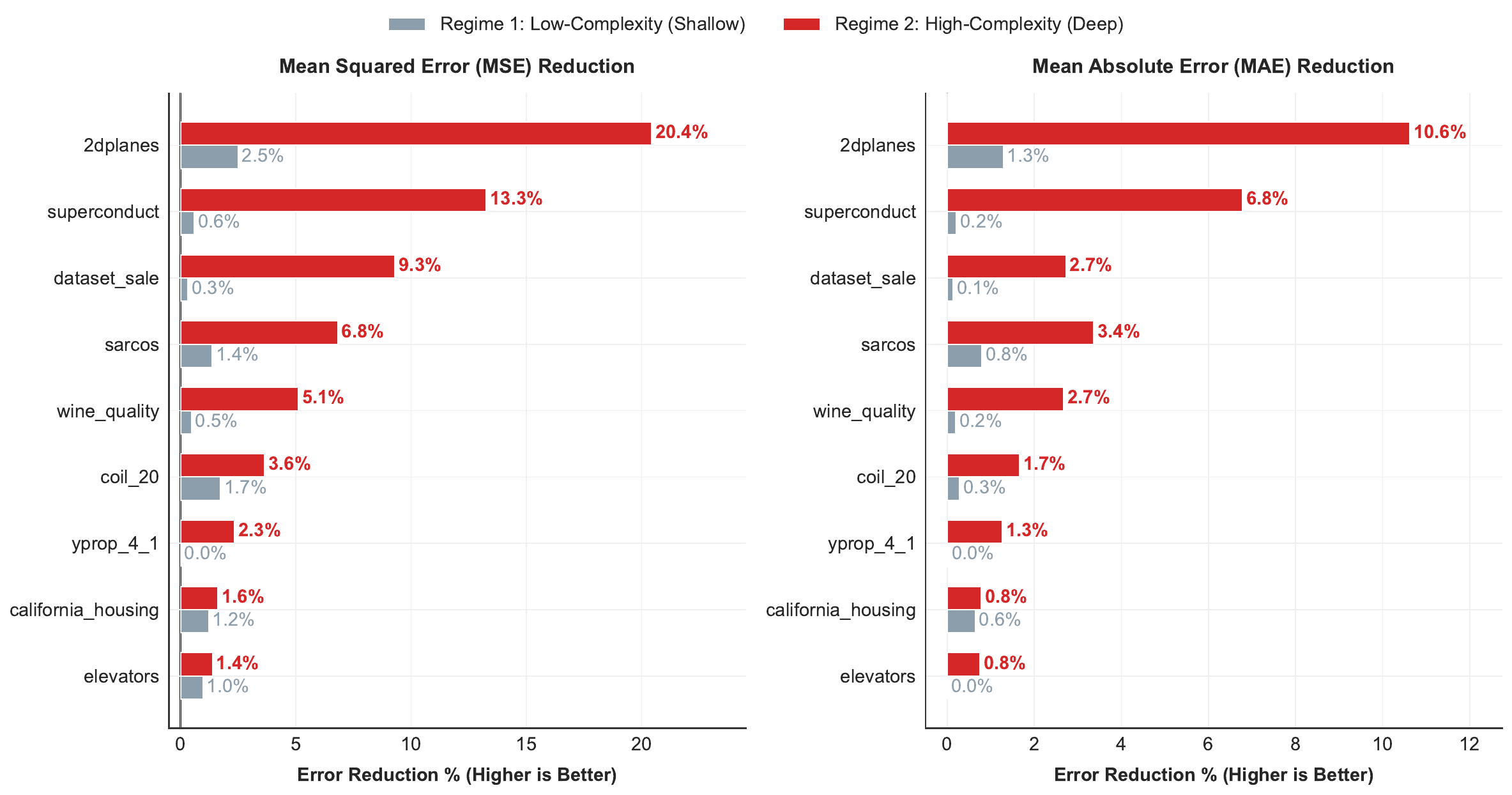}
    \caption{Relative error reduction in out-of-sample MSE (left) and MAE (right) achieved by the theoretically calibrated $\text{RF}^*$ (bootstrap with \texttt{max\_samples=}$\alpha^*$) over the standard $\text{RF}$ baseline (default bootstrap). The grey bars indicate marginal yet strictly non-negative improvements under the low-complexity regime. The red bars highlight substantial and consistent performance gains under the high-complexity regime, empirically validating that restricting the sampling fraction dynamically suppresses high-order interpolation noise even within a standard bootstrap framework.}
    \label{fig:error_reduction}
\end{figure}

Although the exact variance decomposition in \textcolor{red}{Theorem \ref{thm:exact_variance}} strictly assumes subagging to preserve multi-linear orthogonality, applying the CGAS-calibrated $\alpha^*$ to bootstrap sampling is theoretically justified. In the strongly regularized regime ($k \ll n$), bagging and subagging are asymptotically contiguous since the probability of duplicate draws vanishes \citep{buhlmann2002analyzing,wager2014asymptotic}. Even outside this asymptotic limit, restricting \texttt{max\_samples} bounds the expected number of unique instances per tree. This functionally induces a similar geometric attenuation on high-order interactions ($\zeta_c$), allowing $\alpha^*$ to remain an effective proxy for suppressing interpolation noise despite sample replacement.

\textcolor{red}{Figure \ref{fig:error_reduction}} compares the relative MSE and MAE reductions of $\text{RF}^*$ against the standard $\text{RF}$. In Regime 1, $\text{RF}^*$ yielded marginal performance changes (MSE reductions of $0\%$-$2.5\%$), demonstrating algorithmic safety by avoiding systematic degradation when rapid attenuation is unwarranted. In contrast, in Regime 2, $\text{RF}^*$ achieved substantial generalization improvements, reducing out-of-sample MSE by $20.4\%$ in \textit{2dplanes} and $13.3\%$ in \textit{superconduct}. These results align with \textcolor{red}{Theorem \ref{thm:spectral_filtering}}. Standard $\text{RF}$ bootstrap sampling draws approximately $63.2\%$ unique instances per tree, imposing a fixed expected subsampling fraction of $\alpha \approx 0.632$. While this attenuation profile suffices for the low-complexity Regime 1, it provides insufficient geometric penalty in Regime 2. Furthermore, the inherent sample duplication ($\sim 36.8\%$) may exacerbate idiosyncratic noise memorization. By restricting the sample size via \texttt{max\_samples=}$\alpha^*$, $\text{RF}^*$ enforces a stronger geometric attenuation, effectively controlling high-order variance and improving finite-sample generalization.

\section{Conclusion}
\label{sec:conclusion}

This work establishes the first exact, finite-sample variance identity for subagging by characterizing the resampling operator as a deterministic low-pass spectral filter. We prove that subagging geometrically attenuates high-order interaction variance, rendering three-decade-old static heuristics such as $\alpha \approx 0.632$ sub-optimal for high-capacity interpolators. These theoretical findings are implemented via the CGAS algorithm, which adaptively calibrates the subsampling ratio to the learner's complexity spectrum. To maintain multilinear orthogonality, the exact decomposition is restricted to sampling without replacement (subagging). Extending this framework to standard bootstrap aggregation remains a non-trivial challenge, as duplicate samples induce complex structural entanglements within the Hoeffding projections. Future research will focus on developing generalized orthogonal bases to disentangle these correlation structures, aiming for a complete finite-sample characterization of sampling with replacement.

\section*{Acknowledgements}
This work was supported in part by the National Natural Science Foundation of China under Grant 62403043. The authors declare that they have no known competing financial interests or personal relationships that could have appeared to influence the work reported in this paper.

\bibliography{sample}

\newpage

\appendix

\section{Notation}
\label{app:sec:notation}

To facilitate a rigorous finite-sample analysis and ensure mathematical clarity, we formalize the primary notation used throughout this paper in \textcolor{red}{Table \ref{tab:global_notation}}.

\begin{longtable}{p{0.15\textwidth} p{0.65\textwidth} p{0.15\textwidth}}
\caption{Summary of Global Symbols and Notations}
\label{tab:global_notation} \\
\toprule
\textbf{Symbol} & \textbf{Definition} & \textbf{Domain/Space} \\
\midrule
\endfirsthead

\multicolumn{3}{c}%
{{\bfseries \tablename\ \thetable{} -- continued from previous page}} \\
\toprule
\textbf{Symbol} & \textbf{Definition} & \textbf{Domain/Space} \\
\midrule
\endhead

\midrule \multicolumn{3}{r}{{Continued on next page}} \\ \midrule
\endfoot

\bottomrule
\endlastfoot

\multicolumn{3}{l}{\textit{\textbf{1. Sets and Spaces}}} \\
\midrule
$\mathcal{X}, \mathcal{Y}$ & Feature input space and target continuous output space & - \\
$\mathcal{Z}$ & The measurable product space $\mathcal{X} \times \mathcal{Y}$ & - \\
$\mathcal{D}_n$ & The finite training dataset $\{Z_1, \dots, Z_n\}$ & $\mathcal{Z}^n$ \\
$[n], [k]$ & The integer index sets $\{1, 2, \dots, n\}$ and $\{1, 2, \dots, k\}$ & $\mathbb{Z}^+$ \\
$\binom{[n]}{k}, \binom{[n]}{c}$ & The collection of all $k$-element or $c$-element subsets drawn from $[n]$ & $2^{[n]}$ \\
$\Omega$ & The discrete sample space of all possible subsamples of size $k$ & $2^{[n]}$ \\
$\mathcal{S}_k$ & The symmetric group containing all permutations of $k$ elements & - \\
$\mathcal{P}(C)$ & The Boolean lattice (power set) of interaction index subset $C$ & $2^C$ \\
$\mathcal{A}_0, \mathcal{A}_1$ & Sub-lattice partitions of $\mathcal{P}(C)$ based strictly on the exclusion ($\mathcal{A}_0$) or inclusion ($\mathcal{A}_1$) of a target integration index $i$ & $\subset 2^C$ \\
$\mathcal{H}$ & The functional space of all valid Hoeffding variance spectra & $\mathbb{R}^k_{\ge 0}$ \\

\midrule
\multicolumn{3}{l}{\textit{\textbf{2. Data and Random Variables}}} \\
\midrule
$Z_i, z_i$ & Independent and identically distributed (i.i.d.) random variable and its realization & $\mathcal{Z}$ \\
$S, S_1, S_2$ & Index subsets representing distinct subsamples of cardinality $k$ & $\binom{[n]}{k}$ \\
$C, J$ & Arbitrary interaction index subsets, typically of cardinality $c$ and $j$ & $\binom{[n]}{c}, 2^C$ \\
$C_1, C_2$ & Distinct interaction index subsets used to prove the mutual orthogonality of Hoeffding projections ($C_1 \neq C_2$) & $2^{[k]}$ \\
$A, B$ & Arbitrary non-empty index subsets of the training sample used in the exact algebraic variance expansion of the base learner & $2^{[k]} \setminus \{\emptyset\}$ \\
$Z_S, z_J$ & The specific subset of training random variables indexed by $S$ or $J$ & $\mathcal{Z}^{|S|}, \mathcal{Z}^{|J|}$ \\
$\xi$ & Internal algorithmic randomness (e.g., random feature splits) & - \\
$\pi$ & Random permutation of training data indices drawn uniformly & $\mathcal{S}_k$ \\
$E_j$ & The event that the $j$-th element of interaction target $C$ is included in subsample $S$ & - \\

\midrule
\multicolumn{3}{l}{\textit{\textbf{3. Scalars and Indices}}} \\
\midrule
$n, k$ & Total sample size and the subsample size drawn without replacement & $\mathbb{Z}^+$ \\
$c, m$ & Strict order of multi-sample interaction ($1 \le c \le k$) & $\mathbb{Z}^+$ \\
$\rho$ & The cardinality of intersection between two subsamples ($\rho = |S_1 \cap S_2|$) & $\{0, \dots, k\}$ \\
$M$ & The maximum finite interaction depth (cutoff order) / Structural capacity proxy & $\mathbb{Z}^+$ \\
$\alpha$ & The continuous subsampling ratio, defined as $\alpha = k/n$ & $(0, 1]$ \\
$\alpha^*$ & The optimal subsampling regularization parameter & $(0, 1]$ \\
$B_0, \beta$ & Strict upper-bound constants for the non-parametric bias decay rate & $\mathbb{R}^+$ \\
$d$ & The dimensionality of the input feature space $\mathcal{X}$ & $\mathbb{Z}^+$ \\
$e$ & The smoothness exponent of a Hölder-continuous target function & $\mathbb{R}^+$ \\
$p$ & Statistical p-value utilized in empirical hypothesis testing (e.g., Wilcoxon signed-rank test) & $(0, 1)$ \\
$Q$ & Algebraic bias decay exponent specific to random trees, dependent on the number of active features & $\mathbb{R}^+$ \\
$\sigma_1^2, \sigma_M^2$ & Invariant total variance contributions of the signal and high-order overfitting noise in the bi-modal case study & $\mathbb{R}^+ \cup \{0\}$ \\
$d_{\max}$ & Maximum tree depth used as an algorithmic proxy for structural complexity & $\mathbb{Z}^+ \cup \{\infty\}$ \\
$K$ & Number of internal cross-validation folds in CGAS algorithm & $\mathbb{Z}^+$ \\
$B_{\text{search}}, B_{\text{final}}$ & The number of estimators in the pilot and final subagged ensembles & $\mathbb{Z}^+$ \\

\midrule
\multicolumn{3}{l}{\textit{\textbf{4. Functions, Operators, and Estimators}}} \\
\midrule
$h(x; Z_S)$ & Deterministic, permutation-invariant base learning algorithm trained on $Z_S$ & $\mathbb{R}$ \\
$\mathcal{A}(x; Z_{1:k}, \xi, \pi)$ & Randomized or order-dependent base learning algorithm prior to marginalization & $\mathbb{R}$ \\
$\hat{F}_{n,k}(x)$ & Subagged ensemble estimator (incomplete, infinite-order U-statistic) & $\mathbb{R}$ \\
$f^*(x)$ & The true target function representing the conditional expectation $\mathbb{E}[Y|X=x]$ & $\mathbb{R}$ \\
$\theta(x), \theta_k(x)$ & Marginalized expected prediction of a base learner trained on $k$ samples & $\mathbb{R}$ \\
$h_c(x; z_C)$ & The $c$-th order canonical Hoeffding projection kernel (pure interactions) & $\mathbb{R}$ \\
$E_J(z_J)$ & Marginalized conditional expectation integrating out remaining variables & $\mathbb{R}$ \\
$H_c(\alpha)$ & The variance inflation multiplier for the $c$-th order spectrum & $\mathbb{R}^+$ \\
$f(\alpha, \boldsymbol{\zeta})$ & The continuous objective function $f(\alpha, \boldsymbol{\zeta}) = -\mathcal{M}(\alpha; \boldsymbol{\zeta})$ used for submodularity in Topkis's Theorem & $\mathbb{R}$ \\
$\Gamma(z)$ & The Gamma function utilized for the analytic continuation of discrete factorials & $\mathbb{R}$ \\
$\mathcal{O}(\cdot)$ & Big-O notation, representing the asymptotic upper bound of a functional growth rate & - \\

\midrule
\multicolumn{3}{l}{\textit{\textbf{5. Measures, Variance Components, and Objectives}}} \\
\midrule
$P, P^{\otimes k}, P^n$ & The unknown data-generating probability measure and its product measures & - \\
$\mathcal{U}, \mathbb{P}_{\mathcal{U}}$ & The uniform probability measure over the discrete subsampling space $\Omega$ & - \\
$\mathbb{E}, \text{Var}, \text{Cov}$ & Mathematical expectation, variance, and covariance operators & - \\
$\mathbb{I}_{\{\cdot\}}$ & The indicator function isolating structural inclusion events & $\{0, 1\}$ \\
$\zeta_c(x)$ & The $c$-th order Hoeffding variance component (Hoeffding spectrum / Sobol index) & $\mathbb{R}^+ \cup \{0\}$ \\
$\boldsymbol{\zeta}$ & The partially ordered variance complexity spectrum sequence $\{\zeta_c\}_{c=1}^M$ & $\mathbb{R}^M$ \\
$\gamma_c(n, k)$ & The exact combinatorial attenuation factor (spectral filter transfer function) & $(0, 1]$ \\
$\mathcal{B}(k)$ & The inherent finite-sample bias, strictly equal to $\theta_k(x) - f^*(x)$ & $\mathbb{R}$ \\
$\mathcal{M}(\alpha; \boldsymbol{\zeta}, M)$ & The continuous finite-sample Mean Squared Error (MSE) upper-bounding envelope & $\mathbb{R}^+$ \\
$\succeq$ & The partial order defined on the functional space of spectra $\mathcal{H}$ & - \\
\end{longtable}

\section{Missing Proof of Lemma \ref{lem:degeneracy_orthogonality}}
\label{app:proof_lem_degeneracy}

\begin{proof}
Let $C \subseteq \{1, \dots, k\}=[k]$ be an index subset with $|C| = c \ge 1$. For any $J \subseteq C$, let $E_{J}(z_{J}) \triangleq \mathbb{E} [ h(x; Z_{1:k}) \mid Z_{J} = z_{J} ]$ denote the marginalized conditional expectation. By the recursive construction in Eq.~\eqref{eq:canonical_projections}, the kernel $h_c$ satisfies the relation $E_C(z_C) = \sum_{J \subseteq C} h_{|J|}(x; z_{J})$. Applying M\"obius inversion over the Boolean lattice of subsets $(\mathcal{P}(C), \subseteq)$ \citep{stanley2011enumerative}, we obtain the explicit form:
\begin{equation}
\label{eq:mobius_explicit}
    h_c(x; z_C) = \sum_{J \subseteq C} (-1)^{|C| - |J|} E_{J}(z_{J}).
\end{equation}
To prove strong degeneracy, fix an arbitrary index $i \in C$ and consider the expectation of $h_c$ over $Z_i \sim P$. Applying $\mathbb{E}_{Z_i}$ to Eq.~\eqref{eq:mobius_explicit} and invoking the linearity of expectation:
\begin{equation}
\label{eq:exp_split}
    \mathbb{E}_{Z_i} [h_c(x; z_C)] = \sum_{J \subseteq C} (-1)^{|C| - |J|} \mathbb{E}_{Z_i} \left[ E_{J}(z_{J}) \right].
\end{equation}
Define the sub-lattice partition $\mathcal{P}(C) = \mathcal{A}_0 \cup \mathcal{A}_1$, where $\mathcal{A}_0 = \{ J \in \mathcal{P}(C) : i \notin J \}$ and $\mathcal{A}_1 = \{ J \in \mathcal{P}(C) : i \in J \}$. There exists a bijection $\phi: \mathcal{A}_0 \to \mathcal{A}_1$ defined by $\phi(J) = J \cup \{i\}$. We evaluate $\mathbb{E}_{Z_i} [E_J]$ for each case:
\begin{enumerate}
    \item For $J \in \mathcal{A}_0$: Since $Z_i \notin Z_J$, $E_J(z_J)$ is constant w.r.t. $Z_i$, thus $\mathbb{E}_{Z_i} [E_J(z_J)] = E_J(z_J)$.
    \item For $J \in \mathcal{A}_1$: Let $J = J' \cup \{i\}$ where $J' \in \mathcal{A}_0$. By the tower property of conditional expectation:
    \begin{equation*}
        \mathbb{E}_{Z_i} [E_{J' \cup \{i\}}(z_{J' \cup \{i\}})] = \mathbb{E}_{Z_i} [ \mathbb{E} [ h \mid Z_{J'}, Z_i ] ] = \mathbb{E} [ h \mid Z_{J'} ] = E_{J'}(z_{J'}).
    \end{equation*}
\end{enumerate}
Substituting these results and the bijection $\phi$ into Eq.~\eqref{eq:exp_split} yields:
\begin{align*}
    \mathbb{E}_{Z_i} [h_c(x; z_C)] 
    &= \sum_{J \in \mathcal{A}_0} (-1)^{|C| - |J|} \mathbb{E}_{Z_i} [E_J] + \sum_{J \in \mathcal{A}_1} (-1)^{|C| - |J|} \mathbb{E}_{Z_i} [E_J] \\
    &= \sum_{J \in \mathcal{A}_0} (-1)^{|C| - |J|} E_J(z_J) + \sum_{J \in \mathcal{A}_0} (-1)^{|C| - |J \cup \{i\}|} E_J(z_J) \\
    &= \sum_{J \in \mathcal{A}_0} \left( (-1)^{|C| - |J|} + (-1)^{|C| - |J| - 1} \right) E_J(z_J) \\
    &= \sum_{J \in \mathcal{A}_0} \left( (-1)^{|C| - |J|} - (-1)^{|C| - |J|} \right) E_J(z_J) = 0.
\end{align*}
This confirms that $h_c$ is strongly degenerate $P$-almost surely for all $i \in C$.

Consider two distinct index subsets $C_1, C_2 \subseteq \{1, \dots, k\}$ such that $C_1 \neq C_2$. Without loss of generality, assume there exists $i \in C_1 \setminus C_2$. Let $U = (C_1 \cup C_2) \setminus \{i\}$. The covariance evaluates to:
\begin{align*}
    \mathbb{E}_{Z_{C_1 \cup C_2}} \left[ h_{|C_1|}(x; Z_{C_1}) h_{|C_2|}(x; Z_{C_2}) \right] 
    &= \mathbb{E}_{Z_{U}} \left[ \mathbb{E}_{Z_i} \left[ h_{|C_1|}(x; Z_{C_1}) h_{|C_2|}(x; Z_{C_2}) \mid Z_U \right] \right] \\
    &\stackrel{(a)}{=} \mathbb{E}_{Z_{U}} \left[ h_{|C_2|}(x; Z_{C_2}) \cdot \mathbb{E}_{Z_i} \left[ h_{|C_1|}(x; Z_{C_1}) \mid Z_U \right] \right] \\
    &\stackrel{(b)}{=} \mathbb{E}_{Z_{U}} \left[ h_{|C_2|}(x; Z_{C_2}) \cdot 0 \right] = 0,
\end{align*}
where (a) follows since $i \notin C_2$, making $h_{|C_2|}$ independent of $Z_i$, and (b) follows from the strong degeneracy of $h_{|C_1|}$ w.r.t. $Z_i$. This completes the proof.
\end{proof}

\section{Missing Proof of Lemma \ref{lem:base_variance_decomp}}
\label{app:proof_lem_base_var}

\begin{proof}
By reversing the recursive definition of the canonical projections in Eq.~\eqref{eq:canonical_projections}, the centralized base learner can be expressed as an exact sum over all non-empty subsets of the index set $[k]$:
\begin{equation*}
    h(x; Z_{1:k}) - \theta(x) = \sum_{\substack{A \subseteq [k] \\ A \neq \emptyset}} h_{|A|}(x; Z_A).
\end{equation*}

By definition, the variance of the base learner is the expectation of its squared centered prediction. Expanding the square of the sum yields:
\begin{align*}
    \text{Var}\big(h(x; Z_{1:k})\big) &= \mathbb{E}_{Z_{1:k} \sim P^{\otimes k}} \left[ \left( \sum_{\substack{A \subseteq [k] \\ A \neq \emptyset}} h_{|A|}(x; Z_A) \right)^2 \right] \\[1ex]
    &= \mathbb{E}_{Z_{1:k} \sim P^{\otimes k}} \left[ \sum_{\substack{A \subseteq [k] \\ A \neq \emptyset}} \sum_{\substack{B \subseteq [k] \\ B \neq \emptyset}} h_{|A|}(x; Z_A) h_{|B|}(x; Z_B) \right].
\end{align*}

By the linearity of expectation over finite sums, we interchange the expectation and the double summation:
\begin{align*}
    \text{Var}\big(h(x; Z_{1:k})\big) &= \sum_{\substack{A \subseteq [k] \\ A \neq \emptyset}} \sum_{\substack{B \subseteq [k] \\ B \neq \emptyset}} \mathbb{E}_{Z_{1:k} \sim P^{\otimes k}} \Big[ h_{|A|}(x; Z_A) h_{|B|}(x; Z_B) \Big].
\end{align*}

According to \textcolor{red}{Lemma~\ref{lem:degeneracy_orthogonality}}, the canonical projections are mutually orthogonal. For any two distinct subsets $A \neq B$, their cross-covariance evaluates exactly to zero. Consequently, the double summation collapses strictly to its diagonal terms (where $A = B$):
\begin{align*}
    \text{Var}\big(h(x; Z_{1:k})\big) &= \sum_{\substack{A \subseteq [k] \\ A \neq \emptyset}} \mathbb{E}_{Z_{1:k} \sim P^{\otimes k}} \Big[ h_{|A|}(x; Z_A)^2 \Big] \\[1ex]
    &= \sum_{\substack{A \subseteq [k] \\ A \neq \emptyset}} \text{Var}\big(h_{|A|}(x; Z_A)\big).
\end{align*}

To evaluate this single summation analytically, we partition the terms based on the subset cardinality $c = |A|$:
\begin{align*}
    \text{Var}\big(h(x; Z_{1:k})\big) &= \sum_{c=1}^k \sum_{\substack{A \subseteq [k] \\ |A| = c}} \text{Var}\big(h_c(x; Z_A)\big).
\end{align*}

Under \textcolor{red}{Assumption~\ref{ass:sym_sq_int}} and the i.i.d. measure $P^{\otimes k}$, the variance of a canonical projection depends strictly on its interaction order $c$, independent of the specific indices in subset $A$. Thus, we substitute $\text{Var}(h_c(x; Z_A)) = \zeta_c(x)$:
\begin{align*}
    \text{Var}\big(h(x; Z_{1:k})\big) &= \sum_{c=1}^k \sum_{\substack{A \subseteq [k] \\ |A| = c}} \zeta_c(x).
\end{align*}

Since the inner summation iterates over a constant $\zeta_c(x)$ for all possible $c$-element subsets drawn from $[k]$, it resolves to a simple count. There are exactly $\binom{k}{c}$ such distinct subsets, yielding:
\begin{align*}
    \text{Var}\big(h(x; Z_{1:k})\big) &= \sum_{c=1}^k \binom{k}{c} \zeta_c(x).
\end{align*}
This completes the proof.
\end{proof}

\section{Missing Proof for Theorem \ref{thm:exact_variance}}
\label{app:proof_thm_exa_var}

\begin{proof}
Let $[n] = \{1, 2, \dots, n\}$ denote the finite index set of the training dataset. We define the discrete subsampling space $\Omega = \{S \subseteq [n] : |S| = k\}$. The subagged estimator $\hat{F}_{n,k}(x)$ induces a uniform probability measure $\mathcal{U}$ over $\Omega$, where $\mathbb{P}_{\mathcal{U}}(S) = \binom{n}{k}^{-1}$ for any realization $S \in \Omega$.

The total variance of $\hat{F}_{n,k}(x)$ is given by the expected covariance of two base learners trained on index sets $S_1, S_2$ drawn independently from $\Omega$ under the measure $\mathcal{U}$. Note that the covariance is evaluated with respect to the unknown data-generating distribution, while the outer expectation is with respect to the algorithmic resampling:
\begin{equation}
\label{eq:var_expansion}
    \text{Var}(\hat{F}_{n,k}(x)) = \mathbb{E}_{S_1, S_2 \sim \mathcal{U}} \left[ \text{Cov}(h(x; Z_{S_1}), h(x; Z_{S_2})) \right].
\end{equation}

While \textcolor{red}{Lemma~\ref{lem:degeneracy_orthogonality}} establishes the orthogonality of interaction subsets $C \subseteq [k]$ within a single base learner, we now lift this analysis to the entire finite population $[n]$. Consider a specific interaction target subset $C \subseteq [n]$ with $|C|=c$. By the established Hoeffding multi-linear orthogonality, the functional covariance between base learners trained on different subsamples $S_1$ and $S_2$ is completely decoupled into independent orthogonal contributions. A specific $c$-th order component $\zeta_c(x)$ contributes to the covariance if and only if its underlying index set $C$ is fully contained in the intersection of both subsamples, i.e., $C \subseteq (S_1 \cap S_2)$. Using the indicator function $\mathbb{I}_{\{C \subseteq S\}}$, we write:
\begin{equation*}
    \text{Cov}(h(x; Z_{S_1}), h(x; Z_{S_2})) = \sum_{c=1}^k \sum_{\substack{C \subseteq [n] \\ |C|=c}} \zeta_c(x) \mathbb{I}_{\{C \subseteq S_1\}} \mathbb{I}_{\{C \subseteq S_2\}}.
\end{equation*}

Substituting this into Eq.~\eqref{eq:var_expansion}. By invoking the linearity of expectation for finite sums and exploiting the unconditional independence of the subsampling events $S_1$ and $S_2$, we obtain:
\begin{align*}
    \text{Var}(\hat{F}_{n,k}(x)) &= \mathbb{E}_{S_1, S_2 \sim \mathcal{U}} \left[ \sum_{c=1}^k \sum_{|C|=c} \zeta_c(x) \mathbb{I}_{\{C \subseteq S_1\}} \mathbb{I}_{\{C \subseteq S_2\}} \right] \\[1ex]
    &= \sum_{c=1}^k \zeta_c(x) \sum_{|C|=c} \mathbb{E}_{S_1 \sim \mathcal{U}}[\mathbb{I}_{\{C \subseteq S_1\}}] \cdot \mathbb{E}_{S_2 \sim \mathcal{U}}[\mathbb{I}_{\{C \subseteq S_2\}}] \\[1ex]
    &= \sum_{c=1}^k \zeta_c(x) \sum_{|C|=c} \left( \mathbb{P}_{\mathcal{U}}(C \subseteq S) \right)^2.
\end{align*}

For a fixed target subset $C$ of size $c$, constructing a realization $S \in \Omega$ such that $C \subseteq S$ necessitates fixing the $c$ elements of $C$ and uniformly choosing the remaining $k-c$ elements from the complement set. Thus:
\begin{equation*}
    \mathbb{P}_{\mathcal{U}}(C \subseteq S) = \frac{|\{ S \in \Omega \mid C \subseteq S \}|}{|\Omega|} = \frac{\binom{n-c}{k-c}}{\binom{n}{k}}.
\end{equation*}

Since there are exactly $\binom{n}{c}$ possible target subsets of size $c$ within $[n]$, the inner summation evaluates explicitly as:
\begin{align*}
    \sum_{|C|=c} \left( \mathbb{P}_{\mathcal{U}}(C \subseteq S) \right)^2 
    &= \sum_{|C|=c} \left( \frac{\binom{n-c}{k-c}}{\binom{n}{k}} \right)^2 \\[1ex]
    &= \binom{n}{c} \left( \frac{\binom{n-c}{k-c}}{\binom{n}{k}} \right)^2 \\[1ex]
    &= \left[ \frac{\binom{n}{c} \binom{n-c}{k-c}}{\binom{n}{k}} \right] \cdot \frac{\binom{n-c}{k-c}}{\binom{n}{k}} \\[2ex]
    &= \left[ \frac{ \frac{n!}{c!(n-c)!} \cdot \frac{(n-c)!}{(k-c)!(n-k)!} }{ \frac{n!}{k!(n-k)!} } \right] \cdot \left( \frac{ \frac{(n-c)!}{(k-c)!(n-k)!} }{ \frac{n!}{k!(n-k)!} } \right) \\[2ex]
    &= \left[ \frac{n!}{c!(k-c)!(n-k)!} \cdot \frac{k!(n-k)!}{n!} \right] \cdot \left( \frac{(n-c)! \, k!}{(k-c)! \, n!} \right) \\[2ex]
    &= \left[ \frac{k!}{c!(k-c)!} \right] \cdot \left( \frac{k(k-1)\cdots(k-c+1)}{n(n-1)\cdots(n-c+1)} \right) \\[2ex]
    &= \binom{k}{c} \prod_{j=0}^{c-1} \frac{k-j}{n-j}.
\end{align*}

By substituting this identity back into the variance expansion and defining the attenuation factor as $\gamma_c(n,k) \triangleq \prod_{j=0}^{c-1} \frac{k-j}{n-j}$, we recover the exact decoupling:
\begin{equation*}
    \text{Var}(\hat{F}_{n,k}(x)) = \sum_{c=1}^k \gamma_c(n,k) \binom{k}{c} \zeta_c(x).
\end{equation*}
This completes the proof.
\end{proof}

\section{Missing Proof of Theorem \ref{thm:spectral_filtering}}
\label{app:proof_thm_spe_fil}

\begin{proof}
Let $S$ be a random subsample of size $k$ drawn uniformly without replacement from $[n]$. For a fixed subset $C = \{i_1, \dots, i_c\}$ of size $c$, let $E_j$ denote the event $\{i_j \in S\}$ for $j=1, \dots, c$. By the symmetry of the uniform measure $\mathcal{U}$, the attenuation factor is the joint probability:
\begin{equation*}
    \gamma_c(n, k) = \mathbb{P}(E_1 \cap E_2 \cap \dots \cap E_c).
\end{equation*}
Applying the chain rule of probability:
\begin{equation}
\label{eq:proof3_chain_rule}
    \gamma_c(n, k) = \mathbb{P}(E_1) \prod_{j=1}^{c-1} \mathbb{P}(E_{j+1} \mid E_1, \dots, E_j).
\end{equation}
The marginal probability is $\mathbb{P}(E_1) = \frac{k}{n} = \alpha$. For $j \ge 1$, the conditional probability is:
\begin{equation*}
    \mathbb{P}(E_{j+1} \mid E_1, \dots, E_j) = \frac{k - j}{n - j}.
\end{equation*}
To compare this with the marginal ratio $\alpha$, consider the difference:
\begin{align*}
    \frac{k - j}{n - j} - \frac{k}{n} &= \frac{n(k - j) - k(n - j)}{n(n - j)} \nonumber \\
    &= \frac{nk - nj - kn + kj}{n(n - j)} \nonumber \\
    &= \frac{j(k - n)}{n(n - j)}. 
\end{align*}
Under the condition $n > k$, the numerator $j(k-n)$ is strictly negative for all $j \ge 1$. Thus:
\begin{equation}
\label{eq:proof3_neg_association}
    \mathbb{P}(E_{j+1} \mid E_1, \dots, E_j) < \mathbb{P}(E_1) = \alpha.
\end{equation}
Substituting Eq.~\eqref{eq:proof3_neg_association} into the chain rule Eq.~\eqref{eq:proof3_chain_rule} for $c \ge 2$:
\begin{equation*}
    \gamma_c(n, k) < \alpha \prod_{j=1}^{c-1} \alpha = \alpha^c.
\end{equation*}
The equality $\gamma_c(n, k) = \alpha^c$ holds if $c=1$ (single event) or if $n=k$ (where $\mathbb{P}(E_{j+1} \mid \dots) = 1$). For all other cases, $\gamma_c(n,k) < \alpha^c$.

For a fixed interaction order $c$, we analyze the limit of the product form:
\begin{align}
    \lim_{n \to \infty, \frac{k}{n} \to \alpha} \gamma_c(n, k) &= \lim_{n \to \infty} \prod_{j=0}^{c-1} \frac{k - j}{n - j} \nonumber \\
    &= \prod_{j=0}^{c-1} \left( \lim_{n \to \infty} \frac{k - j}{n - j} \right). \label{eq:proof3_limit_prod}
\end{align}
Expanding each term in the limit:
\begin{align}
\label{eq:proof3_limit_term}
    \lim_{n \to \infty, \frac{k}{n} \to \alpha} \frac{k - j}{n - j} &= \lim_{n \to \infty, \frac{k}{n} \to \alpha} \frac{k/n - j/n}{1 - j/n} \nonumber \\
    &= \frac{\lim (k/n) - \lim (j/n)}{1 - \lim (j/n)} \nonumber \\
    &= \frac{\alpha - 0}{1 - 0} = \alpha.
\end{align}
Substituting Eq.~\eqref{eq:proof3_limit_term} back into the product Eq.~\eqref{eq:proof3_limit_prod}:
\begin{equation*}
    \lim_{n \to \infty, \frac{k}{n} \to \alpha} \gamma_c(n, k) = \prod_{j=0}^{c-1} \alpha = \alpha^c.
\end{equation*}
This demonstrates that as $n \to \infty$, the negative association between sampling events vanishes, and the attenuation converges to a pure geometric filter.
\end{proof}

\section{Missing Proof of Lemma \ref{lem:mse_bound}}
\label{app:proof_lem_mse_bound}

\begin{proof}
Let $\mathcal{D}_n \sim P^n$ denote the training dataset and $\mathcal{U}$ denote the uniform probability measure over the discrete subsampling space $\Omega = \{S \subseteq [n] : |S| = k\}$. The subagged estimator is defined as the expectation of the base learner over the measure $\mathcal{U}$:
\begin{equation*}
\hat{F}_{n,k}(x) = \mathbb{E}_{S \sim \mathcal{U}}[h(x; Z_S)].
\end{equation*}

The point-wise MSE at a target point $x$ is defined as the $L_2(P^n)$ distance between the subagged estimator and the true target function $f^*(x)$:
\begin{equation*}
\text{MSE}(\hat{F}_{n,k}(x)) = \mathbb{E}_{\mathcal{D}_n \sim P^n} \left[ \left(\hat{F}_{n,k}(x) - f^*(x)\right)^2 \right].
\end{equation*}

To isolate the structural bias from the sampling variability, we perform a standard bias-variance decomposition by centering the estimator around its expectation $\mathbb{E}_{\mathcal{D}_n}[\hat{F}_{n,k}(x)]$:
\begin{equation*}
\begin{aligned}
\text{MSE}(\hat{F}_{n,k}(x)) &= \mathbb{E}_{\mathcal{D}_n} \left[ \left( \hat{F}_{n,k}(x) - \mathbb{E}_{\mathcal{D}_n}[\hat{F}_{n,k}(x)] + \mathbb{E}_{\mathcal{D}_n}[\hat{F}_{n,k}(x)] - f^*(x) \right)^2 \right] \\
&= \mathbb{E}_{\mathcal{D}_n} \left[ \left( \hat{F}_{n,k}(x) - \mathbb{E}_{\mathcal{D}_n}[\hat{F}_{n,k}(x)] \right)^2 \right] + \left( \mathbb{E}_{\mathcal{D}_n}[\hat{F}_{n,k}(x)] - f^*(x) \right)^2 \\
&\quad + 2 \, \mathbb{E}_{\mathcal{D}_n} \left[ \left( \hat{F}_{n,k}(x) - \mathbb{E}_{\mathcal{D}_n}[\hat{F}_{n,k}(x)] \right) \left( \mathbb{E}_{\mathcal{D}_n}[\hat{F}_{n,k}(x)] - f^*(x) \right) \right].
\end{aligned}
\end{equation*}

Since the term $\big(\mathbb{E}_{\mathcal{D}_n}[\hat{F}_{n,k}(x)] - f^*(x)\big)$ is deterministic with respect to the data-generating measure $P^n$, by the linearity of expectation, the cross-term simplifies to:
\begin{equation*}
\begin{aligned}
\text{Cross-Term} &= 2 \left( \mathbb{E}_{\mathcal{D}_n}[\hat{F}_{n,k}(x)] - f^*(x) \right) \cdot \left( \mathbb{E}_{\mathcal{D}_n} \left[ \hat{F}_{n,k}(x) \right] - \mathbb{E}_{\mathcal{D}_n} \left[ \mathbb{E}_{\mathcal{D}_n}[\hat{F}_{n,k}(x)] \right] \right) \\
&= 2 \left( \mathbb{E}_{\mathcal{D}_n}[\hat{F}_{n,k}(x)] - f^*(x) \right) \cdot \left( \mathbb{E}_{\mathcal{D}_n}[\hat{F}_{n,k}(x)] - \mathbb{E}_{\mathcal{D}_n}[\hat{F}_{n,k}(x)] \right) \\
&= 0.
\end{aligned}
\end{equation*}

This leaves the exact additive decomposition into squared bias and ensemble variance:
\begin{equation*}
\text{MSE}(\hat{F}_{n,k}(x)) = \underbrace{ \left( \mathbb{E}_{\mathcal{D}_n}[\hat{F}_{n,k}(x)] - f^*(x) \right)^2 }_{\text{Squared Bias}} + \underbrace{ \text{Var}_{\mathcal{D}_n}(\hat{F}_{n,k}(x)) }_{\text{Ensemble Variance}}.
\end{equation*}

For the \textbf{Squared Bias} term, we evaluate the inner expectation $\mathbb{E}_{\mathcal{D}_n}[\hat{F}_{n,k}(x)]$. By the linearity of expectation and the independence of the algorithmic resampling from the data-generating process, we interchange the order of expectations:
\begin{equation*}
\mathbb{E}_{\mathcal{D}_n \sim P^n}[\hat{F}_{n,k}(x)] = \mathbb{E}_{\mathcal{D}_n \sim P^n} \left[ \mathbb{E}_{S \sim \mathcal{U}}[h(x; Z_S)] \right] = \mathbb{E}_{S \sim \mathcal{U}} \left[ \mathbb{E}_{\mathcal{D}_n \sim P^n}[h(x; Z_S)] \right].
\end{equation*}

Because all training samples in $\mathcal{D}_n$ are independent and identically distributed (i.i.d.), the marginal expectation of the base learner trained on any subset $Z_S$ of size $|S|=k$ is invariant and equivalent to the expectation on the first $k$ samples:
\begin{equation*}
\mathbb{E}_{\mathcal{D}_n \sim P^n}[h(x; Z_S)] = \mathbb{E}_{Z_{1:k} \sim P^k}[h(x; Z_{1:k})] \triangleq \theta_k(x).
\end{equation*}
Thus, $\mathbb{E}_{S \sim \mathcal{U}} [ \theta_k(x) ] = \theta_k(x)$. Substituting this result into the bias component, we find that the subagging operator preserves the structural bias of the base learner:
\begin{equation*}
\text{Squared Bias}(\hat{F}_{n,k}(x)) = \left( \theta_k(x) - f^*(x) \right)^2.
\end{equation*}
By \textcolor{red}{Assumption \ref{ass:mon_bias_decay}}, this term is strictly upper bounded as $\text{Squared Bias} \le B_0 k^{-2\beta}$.

For the \textbf{Ensemble Variance} term, we directly invoke the exact finite-sample variance decomposition from \textcolor{red}{Theorem \ref{thm:exact_variance}}:
\begin{equation*}
\text{Var}_{\mathcal{D}_n}(\hat{F}_{n,k}(x)) = \sum_{c=1}^k \gamma_c(n,k) \binom{k}{c} \zeta_c(x).
\end{equation*}

Combining the upper bound on bias with the exact variance identity yields the final finite-sample MSE bound:
\begin{equation*}
\text{MSE}(\hat{F}_{n,k}(x)) \le B_0 k^{-2\beta} + \sum_{c=1}^k \gamma_c(n,k) \binom{k}{c} \zeta_c(x).
\end{equation*}
This completes the proof.
\end{proof}

\section{Missing Proof of Lemma \ref{lem:continuous_mse}}
\label{app:proof_lem_continuous_mse}

\begin{proof}
Recall from \textcolor{red}{Lemma~\ref{lem:mse_bound}} that the finite-sample MSE for any discrete $k \in \mathbb{Z}^+$ satisfies:
\begin{equation*}
    MSE(\hat{F}_{n,k}(x)) \le B_0 k^{-2\beta} + \sum_{c=1}^{k} \gamma_c(n, k) \binom{k}{c} \zeta_c(x).
\end{equation*}

Under \textcolor{red}{Assumption~\ref{ass:finite_interaction}}, the spectrum is strictly truncated at the maximum interaction depth $M$, i.e., $\zeta_c(x) = 0$ for all $c > M$. Substituting this structural sparsity into the variance summation yields:
\begin{equation*} 
    \sum_{c=1}^{k} \gamma_c(n, k) \binom{k}{c} \zeta_c(x) = \sum_{c=1}^{\min(k, M)} \gamma_c(n, k) \binom{k}{c} \zeta_c(x).
\end{equation*}

To construct a continuous surrogate envelope, we define the subsampling ratio $\alpha = k/n$. Constraining the domain to $\alpha \in [M/n, 1]$ directly establishes the lower bound $\alpha n \ge M$, which simplifies the truncated limit: $\min(\alpha n, M) = M$. Substituting $k = \alpha n$ into the MSE bound:
\begin{equation*}
    MSE(\hat{F}_{n,\alpha n}(x)) \le B_0 (\alpha n)^{-2\beta} + \sum_{c=1}^{M} \gamma_c(n, \alpha n) \binom{\alpha n}{c} \zeta_c(x).
\end{equation*}

Applying the spectral filtering bound $\gamma_c(n, \alpha n) \le \alpha^c$ from \textcolor{red}{Theorem~\ref{thm:spectral_filtering}}:
\begin{equation} 
\label{eq:proof_pre_gamma}
    MSE(\hat{F}_{n,\alpha n}(x)) \le B_0 (\alpha n)^{-2\beta} + \sum_{c=1}^{M} \alpha^c \binom{\alpha n}{c} \zeta_c(x).
\end{equation}

To establish differentiability over the continuum $\mathbb{R}^+$, we extend the discrete binomial coefficient via the falling factorial expansion:
\begin{equation*}
    \binom{\alpha n}{c} = \frac{1}{c!} \prod_{j=0}^{c-1} (\alpha n - j) = \frac{(\alpha n)(\alpha n - 1) \cdots (\alpha n - c + 1)}{c!}.
\end{equation*}

Multiplying the numerator and denominator by the Gamma function $\Gamma(\alpha n - c + 1)$ yields:
\begin{equation*} 
    \binom{\alpha n}{c} = \frac{(\alpha n)(\alpha n - 1) \cdots (\alpha n - c + 1) \cdot \Gamma(\alpha n - c + 1)}{c! \cdot \Gamma(\alpha n - c + 1)}.
\end{equation*}

Applying the functional equation $\Gamma(z+1) = z\Gamma(z)$ recursively absorbs the falling factorial:
\begin{align*}
    (\alpha n - c + 1)\Gamma(\alpha n - c + 1) &= \Gamma(\alpha n - c + 2) \\
    (\alpha n - c + 2)\Gamma(\alpha n - c + 2) &= \Gamma(\alpha n - c + 3) \\
    &\;\;\vdots \\
    (\alpha n)\Gamma(\alpha n) &= \Gamma(\alpha n + 1).
\end{align*}

Substituting this absorbed numerator and noting $\Gamma(c+1) = c!$, we obtain the analytic continuation:
\begin{equation} 
\label{eq:proof_gamma_final}
    \binom{\alpha n}{c} = \frac{\Gamma(\alpha n + 1)}{\Gamma(c + 1)\Gamma(\alpha n - c + 1)}.
\end{equation}

To guarantee that Eq.~\eqref{eq:proof_gamma_final} is globally analytic and strictly avoids the singular poles of the Gamma function at non-positive integers ($\{0, -1, -2, \dots\}$), we evaluate the algebraic lower bound of the denominator's argument. Given $c \le M$ and $\alpha n \ge M$:
\begin{equation*}
    \alpha n - c + 1 \ge M - M + 1 = 1 > 0.
\end{equation*}
Because the argument is strictly positive, $\Gamma(\alpha n - c + 1) \in \mathbb{R}^+$ uniformly across the continuous domain, verifying the absence of singularities.

Substituting the continuous mapping Eq.~\eqref{eq:proof_gamma_final} into Eq.~\eqref{eq:proof_pre_gamma} defines the exact upper-bounding differentiable surrogate:
\begin{equation*}
    MSE(\hat{F}_{n,\alpha n}(x)) \le B_0(\alpha n)^{-2\beta} + \sum_{c=1}^{M} \alpha^c \frac{\Gamma(\alpha n + 1)}{\Gamma(c + 1)\Gamma(\alpha n - c + 1)} \zeta_c(x) \triangleq \mathcal{M}(\alpha; \boldsymbol{\zeta}, M).
\end{equation*}
This completes the proof.
\end{proof}

\section{Missing Proof of Theorem \ref{thm:optimal_alpha}}
\label{app:proof_thm_opt_alpha}

\begin{proof}
Let $\mathcal{H}$ be the space of valid Hoeffding variance spectra. For any two algorithms with spectra $\boldsymbol{\zeta}^{(1)}, \boldsymbol{\zeta}^{(2)} \in \mathcal{H}$, we define the partial order $\succeq$ such that $\boldsymbol{\zeta}^{(2)} \succeq \boldsymbol{\zeta}^{(1)}$ if there exists a critical interaction order $c^*$ satisfying:
\begin{align*}
    \zeta_c^{(2)} - \zeta_c^{(1)} &= 0, \quad \forall c < c^*, \\
    \zeta_c^{(2)} - \zeta_c^{(1)} &\ge 0, \quad \forall c \ge c^*,
\end{align*}
with strict inequality $\zeta_c^{(2)} - \zeta_c^{(1)} > 0$ for at least one $c \ge c^*$.

The continuous asymptotic MSE envelope is given by:
\begin{equation*}
    \mathcal{M}(\alpha; \boldsymbol{\zeta}) = B_0 n^{-2\beta} \alpha^{-2\beta} + \sum_{c=1}^{n} \alpha^c \binom{\alpha n}{c} \zeta_c.
\end{equation*}

To apply the standard maximization form of Topkis's Theorem \citep{topkis1978,topkis1998}, we define the objective function as $f(\alpha, \boldsymbol{\zeta}) = -\mathcal{M}(\alpha; \boldsymbol{\zeta})$. The optimal subsampling ratio satisfies:
\begin{equation*}
    \alpha^*(\boldsymbol{\zeta}) = \arg\max_{\alpha \in (0, 1]} f(\alpha, \boldsymbol{\zeta}).
\end{equation*}

Let $H_c(\alpha)$ denote the variance inflation multiplier for the $c$-th order spectrum:
\begin{equation*}
    H_c(\alpha) \triangleq \alpha^c \binom{\alpha n}{c} = \frac{\alpha^c}{c!} \prod_{j=0}^{c-1} (\alpha n - j).
\end{equation*}

We evaluate the decreasing differences of $f(\alpha, \boldsymbol{\zeta})$ with respect to $\alpha$ and $\boldsymbol{\zeta}$. Taking the partial derivative of $f$ with respect to $\alpha$:
\begin{equation*}
    \frac{\partial f}{\partial \alpha}(\alpha, \boldsymbol{\zeta}) = 2\beta B_0 n^{-2\beta} \alpha^{-2\beta-1} - \sum_{c=1}^{n} \zeta_c H'_c(\alpha).
\end{equation*}

Evaluating the difference in marginal returns as the spectrum shifts from $\boldsymbol{\zeta}^{(1)}$ to $\boldsymbol{\zeta}^{(2)}$ (where $\boldsymbol{\zeta}^{(2)} \succeq \boldsymbol{\zeta}^{(1)}$), the bias terms perfectly cancel out:
\begin{align*}
    \frac{\partial f}{\partial \alpha}(\alpha, \boldsymbol{\zeta}^{(2)}) - \frac{\partial f}{\partial \alpha}(\alpha, \boldsymbol{\zeta}^{(1)}) 
    &= \left( 2\beta B_0 n^{-2\beta} \alpha^{-2\beta-1} - \sum_{c=1}^{n} \zeta^{(2)}_c H'_c(\alpha) \right) \\
    &\quad - \left( 2\beta B_0 n^{-2\beta} \alpha^{-2\beta-1} - \sum_{c=1}^{n} \zeta^{(1)}_c H'_c(\alpha) \right) \\
    &= -\sum_{c=1}^n \left( \zeta^{(2)}_c - \zeta^{(1)}_c \right) H'_c(\alpha) \\
    &= -\sum_{c=c^*}^n \left( \zeta^{(2)}_c - \zeta^{(1)}_c \right) H'_c(\alpha).
\end{align*}

To determine the sign of $H'_c(\alpha)$, we compute the first derivative of $H_c(\alpha)$ via logarithmic differentiation. The natural logarithm of $H_c(\alpha)$ is:
\begin{equation*}
    \ln H_c(\alpha) = c \ln \alpha - \ln(c!) + \sum_{j=0}^{c-1} \ln(\alpha n - j).
\end{equation*}
Differentiating with respect to $\alpha$:
\begin{align*}
    \frac{d}{d \alpha} \ln H_c(\alpha) &= \frac{d}{d \alpha} (c \ln \alpha) - \frac{d}{d \alpha} \ln(c!) + \sum_{j=0}^{c-1} \frac{d}{d \alpha} \ln(\alpha n - j) \\
    &= \frac{c}{\alpha} + 0 + \sum_{j=0}^{c-1} \frac{n}{\alpha n - j}.
\end{align*}
Thus, the derivative $H'_c(\alpha)$ is expanded as:
\begin{align*}
    H'_c(\alpha) &= H_c(\alpha) \frac{d \ln H_c(\alpha)}{d \alpha} \\
    &= \left[ \frac{\alpha^c}{c!} \prod_{j=0}^{c-1} (\alpha n - j) \right] \left( \frac{c}{\alpha} + \sum_{j=0}^{c-1} \frac{n}{\alpha n - j} \right).
\end{align*}

In the operative subsampling regime, $k = \alpha n \ge c$. Therefore, for all valid $j \in \{0, 1, \dots, c-1\}$, we have:
\begin{equation*}
    \alpha n - j \ge \alpha n - (c - 1) > 0.
\end{equation*}
Since $\alpha \in (0, 1]$ and $c \ge 1$, we strictly guarantee:
\begin{equation*}
    \frac{\alpha^c}{c!} \prod_{j=0}^{c-1} (\alpha n - j) > 0, \quad \frac{c}{\alpha} > 0, \quad \text{and} \quad \frac{n}{\alpha n - j} > 0.
\end{equation*}
Consequently, for all $c \ge 1$:
\begin{equation*}
    H'_c(\alpha) > 0.
\end{equation*}

Given $\boldsymbol{\zeta}^{(2)} \succeq \boldsymbol{\zeta}^{(1)}$, we have $\zeta^{(2)}_c - \zeta^{(1)}_c \ge 0$ for all $c \ge c^*$, with strict inequality $\zeta^{(2)}_c - \zeta^{(1)}_c > 0$ for at least one $c \ge c^*$. It follows that:
\begin{equation*}
    \sum_{c=c^*}^n \left( \zeta^{(2)}_c - \zeta^{(1)}_c \right) H'_c(\alpha) > 0.
\end{equation*}
Therefore, the marginal return difference is strictly negative:
\begin{equation*}
    \frac{\partial f}{\partial \alpha}(\alpha, \boldsymbol{\zeta}^{(2)}) - \frac{\partial f}{\partial \alpha}(\alpha, \boldsymbol{\zeta}^{(1)}) < 0.
\end{equation*}
This establishes that $f(\alpha, \boldsymbol{\zeta})$ satisfies strictly decreasing differences in $(\alpha, \boldsymbol{\zeta})$, making it strictly submodular in $(\alpha, \boldsymbol{\zeta})$. 

By Topkis's Theorem \citep{topkis1978,topkis1998}, the submodularity implies that the set of optimal solutions $\alpha^*(\boldsymbol{\zeta})$ is monotonically non-increasing in $\boldsymbol{\zeta}$ with respect to the strong set order, yielding 
\begin{equation*}
    \alpha^*_{(2)} \le \alpha^*_{(1)}. 
\end{equation*}
This concludes the proof.
\end{proof}
\end{document}